\def\eqref#1{equation~\ref{#1}}
\def\1{\bm{1}}
\DeclareMathAlphabet{\mathsfit}{\encodingdefault}{\sfdefault}{m}{sl}
\SetMathAlphabet{\mathsfit}{bold}{\encodingdefault}{\sfdefault}{bx}{n}
\newtheorem{definition}{Definition}
\newtheorem{lemma}{Lemma}
\newtheorem{theorem}{Theorem}
\newtheorem*{theorem*}{Theorem}
\newtheorem*{lemma*}{Lemma}
\title{The Implicit Bias of Depth: How Incremental Learning Drives Generalization}
\author{Daniel Gissin, Shai Shalev-Shwartz, Amit Daniely \\
School of Computer Science \\
The Hebrew University \\
Jerusalem, Israel \\
\texttt{\{daniel.gissin,shais,amit.daniely\}@mail.huji.ac.il} \\
}
\begin{document}

\maketitle

\begin{abstract}
	A leading hypothesis for the surprising generalization of neural networks is that the dynamics of gradient descent bias the model towards simple solutions, by searching through the solution space in an incremental order of complexity. We formally define the notion of incremental learning dynamics and derive the conditions on depth and initialization for which this phenomenon arises in deep linear models. Our main theoretical contribution is a dynamical depth separation result, proving that while shallow models can exhibit incremental learning dynamics, they require the initialization to be exponentially small for these dynamics to present themselves. However, once the model becomes deeper, the dependence becomes polynomial and incremental learning can arise in more natural settings. We complement our theoretical findings by experimenting with deep matrix sensing, quadratic neural networks and with binary classification using diagonal and convolutional linear networks, showing all of these models exhibit incremental learning.
\end{abstract}

\section{Introduction} \label{sec:intro}

Neural networks have led to a breakthrough in modern machine learning, allowing us to efficiently learn highly expressive models that still generalize to unseen data. The theoretical reasons for this success are still unclear, as the generalization capabilities of neural networks defy the classic statistical learning theory bounds. Since these bounds, which depend solely on the capacity of the learned model, are unable to account for the success of neural networks, we must examine additional properties of the learning process. One such property is the optimization algorithm - while neural networks can express a multitude of possible ERM solutions for a given training set, gradient-based methods with the right initialization may be implicitly biased towards certain solutions which generalize.

A possible way such an implicit bias may present itself, is if gradient-based methods were to search the hypothesis space for possible solutions of gradually increasing complexity. This would suggest that while the hypothesis space itself is extremely complex, our search strategy favors the simplest solutions and thus generalizes. One of the leading results along these lines has been by \citet{saxe2013exact}, deriving an analytical solution for the gradient flow dynamics of deep linear networks and showing that for such models, the singular values converge at different rates, with larger values converging first. At the limit of infinitesimal initialization of the deep linear network, \citet{gidel2019implicit} show these dynamics exhibit a behavior of ``incremental learning'' - the singular values of the model are learned separately, one at a time. Our work generalizes these results to small but finite initialization scales.

Incremental learning dynamics have also been explored in gradient descent applied to matrix completion and sensing with a factorized parameterization (\citet{gunasekar2017implicit}, \citet{arora2018optimization}, \citet{woodworth2019kernel}). When initialized with small Gaussian weights and trained with a small learning rate, such a model is able to successfully recover the low-rank matrix which labeled the data, even if the problem is highly over-determined and no additional regularization is applied. In their proof of low-rank recovery for such models, \citet{li2017algorithmic} show that the model remains low-rank throughout the optimization process, leading to the successful generalization. Additionally, \citet{arora2019implicit} explore the dynamics of such models, showing the singular values are learned at different rates and that deeper models exhibit stronger incremental learning dynamics. Our work deals with a more simplified setting, allowing us to determine explicitly under which conditions depth leads to this dynamical phenomenon.

Finally, the learning dynamics of nonlinear models have been studied as well. \citet{combes2018learning} and \citet{williams2019gradient} study the gradient flow dynamics of shallow ReLU networks under restrictive distributional assumptions, \citet{basri2019convergence} show that shallow networks learn functions of gradually increasing frequencies and \citet{nakkiran2019sgd} show how deep ReLU networks correlate with linear classifiers in the early stages of training.

These findings, along with others, suggest that the generalization ability of deep networks is at least in part due to the incremental learning dynamics of gradient descent. Following this line of work, we begin by explicitly defining the notion of incremental learning for a toy model which exhibits this sort of behavior. Analyzing the dynamics of the model for gradient flow and gradient descent, we characterize the effect of the model's depth and initialization scale on incremental learning, showing how deeper models allow for incremental learning in larger (realistic) initialization scales. Specifically, we show that a depth-2 model requires exponentially small initialization for incremental learning to occur, while deeper models only require the initialization to be polynomially small. 

Once incremental learning has been defined and characterized for the toy model, we generalize our results theoretically and empirically for larger linear and quadratic models. Examples of incremental learning in these models can be seen in figure \ref{fig:teaser}, which we discuss further in section \ref{sec:models}.

\begin{figure}
	\centering
	\begin{subfigure}[t]{0.23\linewidth}
		\centering
		\includegraphics[width=1\linewidth]{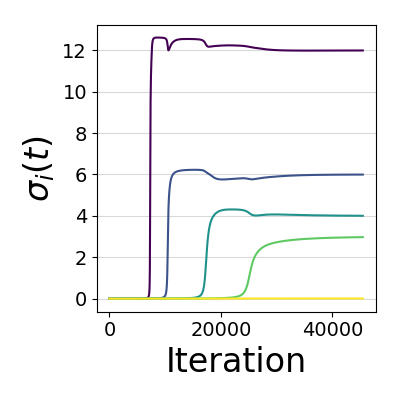}
		\caption{Matrix Sensing}
	\end{subfigure}
	~
	\begin{subfigure}[t]{0.23\linewidth}
		\centering
		\includegraphics[width=1\linewidth]{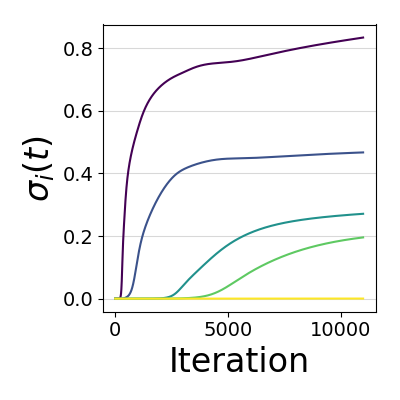}
		\caption{Quadratic Nets}
	\end{subfigure}
	~
	\begin{subfigure}[t]{0.23\linewidth}
		\centering
		\includegraphics[width=1\linewidth]{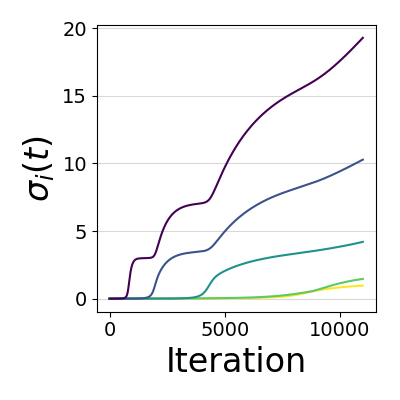}
		\caption{Diagonal Nets}
	\end{subfigure}
	~
	\begin{subfigure}[t]{0.23\linewidth}
		\centering
		\includegraphics[width=1\linewidth]{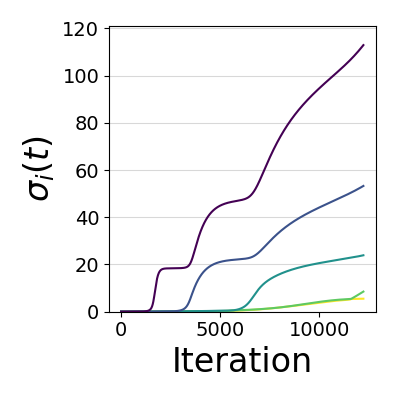}
		\caption{Convolutional Nets}
	\end{subfigure}
	~
	\caption{
		Incremental learning dynamics in deep models. Each panel shows the evolution of the five largest values of $\sigma$, the parameters of the induced model. 
		All models were trained using gradient descent with a small initialization and learning rate, on a small training set such that there are multiple possible solutions.
		In all cases, the deep parameterization of the models lead to ``incremental learning'', where the values are learned at different rates (larger values are learned first), leading to sparse solutions.
		(a) Depth 4 matrix sensing, $\sigma$ denotes singular values (see section \ref{sec:matrix_sensing}).
		(b) Quadratic networks, $\sigma$ denotes singular values (see section \ref{sec:quadratic_nets}).
		(c) Depth 3 diagonal networks, $\sigma$ denotes feature weights (see section \ref{sec:classification}).
		(d) Depth 3 circular-convolutional networks, $\sigma$ denotes amplitudes in the frequency domain of the feature weights (see appendix \ref{app:convolutional_model}).
	}
	\label{fig:teaser}
\end{figure}

\section{Dynamical Analysis of a Toy Model} \label{sec:dynamical_analysis}

We begin by analyzing incremental learning for a simple model. This will allow us to gain a clear understanding of the phenomenon and the conditions for it, which we will later be able to apply to a variety of other models in which incremental learning is present.

\subsection{Preliminaries}

Our simple linear model will be similar to the toy model analyzed by \citet{woodworth2019kernel}. Our input space will be $\mathcal{X}=\mathbb{R}^{d}$ and the hypothesis space will be linear models with non-negative weights, such that:

\begin{equation} \label{eq:model}
f_{\sigma}(x) = \langle \sigma, x \rangle \qquad \sigma \in \mathbb{R}_{\ge 0}^{d}
\end{equation}

We will introduce depth into our model, by parameterizing $\sigma$ using $w\in \mathbb{R}_{\ge 0}^{d}$ in the following way:

\begin{equation} \label{eq:deep_parameterization}
\forall i: \; \sigma_{i} = w_{i}^{N}
\end{equation}

Where $N$ represents the depth of the model. Since we restrict the model to having non-negative weights, this parameterization doesn't change the expressiveness, but it does radically change it's optimization dynamics.

Assuming the data is labeled by some $\sigma^{*} \in \mathbb{R}_{\ge 0}^{d}$, we will study the dynamics of this model for general $N$ under a depth-normalized\footnote{This normalization is used for mathematical convenience to have solutions of different depths exhibit similar time scales in their dynamics. Equivalently, we can derive the solutions for the regular square loss and then use different time scalings in the dynamical analysis.} squared loss over Gaussian inputs, which will allow us to derive our analytical solution:

\begin{equation} \label{eq:loss}
\ell_{N}(w) = \frac{1}{2N^{2}} \mathbb{E}_{x}[(\langle \sigma^{*}, x \rangle - \langle w^{N}, x \rangle) ^{2}] = \frac{1}{2N^{2}} ||w^{N} - \sigma^{*}||^{2}
\end{equation}

We will assume that our model is initialized uniformly with a tunable scaling factor, such that:

\begin{equation} \label{eq:init}
\forall i: \; w_{i}(0) = \sqrt[N]{\sigma_{0}}
\end{equation}

\subsection{Gradient Flow Analytical Solutions}

Analyzing our toy model using gradient flow allows us to obtain an analytical solution for the dynamics of $\sigma(t)$ along with the dynamics of the loss function for a general $N$. For brevity, the following theorem refers only to $N=1,2$ and $N\rightarrow \infty$, however the solutions for $3 \le N < \infty$ are similar in structure to $N \rightarrow \infty$, but more complicated. We also assume $\sigma_{i}^{*}>0$ for brevity, however we can derive the solutions for $\sigma_{i}^{*}=0$ as well. Note that this result is a special case adaptation of the one presented in \citet{saxe2013exact} for deep linear networks:

\begin{theorem} \label{thm:analytical_solution}
	Minimizing the toy linear model described in \eqref{eq:model} with gradient flow over the depth normalized squared loss \eqref{eq:loss}, with Gaussian inputs and weights initialized as in \eqref{eq:init} and assuming $\sigma_{i}^{*} > 0$ leads to the following analytical solutions for different values of $N$:
	
	{\small
		\begin{align} \label{eq:analytical_solutions}
		N=1: \quad & \sigma_{i}(t) = \sigma_{i}^{*} + (\sigma_{0} - \sigma_{i}^{*})e^{-t} \\
		N=2: \quad & \sigma_{i}(t) = \frac{\sigma_{0}\sigma_{i}^{*}e^{\sigma_{i}^{*}t}}{\sigma_{0}\big( e^{\sigma_{i}^{*}t} - 1 \big) + \sigma_{i}^{*}} \\
		N\rightarrow \infty: \quad & t = \frac{1}{(\sigma_{i}^{*})^{2}}\log\big( \frac{\sigma_{i}(t)(\sigma_{0}-\sigma_{i}^{*})}{\sigma_{0}(\sigma_{i}(t)-\sigma_{i}^{*})} \big) - \frac{1}{\sigma_{i}^{*}}\big( \frac{1}{\sigma_{i}(t)} - \frac{1}{\sigma_{0}} \big) 
		\end{align}
	}
	
\end{theorem}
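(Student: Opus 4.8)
The plan is to reduce the whole statement to a single scalar separable ODE per coordinate and integrate it explicitly. First I would use that the inputs are standard Gaussian, so $\mathbb{E}_x[xx^\top]=I$ and the loss \eqref{eq:loss} is exactly $\ell_N(w)=\frac{1}{2N^2}\sum_i(w_i^N-\sigma_i^*)^2$, which decouples across coordinates. Working with one coordinate and dropping the index, $\partial_w\ell_N=\frac1N(w^N-\sigma^*)w^{N-1}$, so gradient flow reads $\dot w=-\frac1N(w^N-\sigma^*)w^{N-1}$ with $w(0)=\sigma_0^{1/N}$.

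The next step is the change of variables $\sigma=w^N$. Differentiating and substituting, $\dot\sigma=Nw^{N-1}\dot w=-w^{2N-2}(w^N-\sigma^*)=\sigma^{2-2/N}(\sigma^*-\sigma)$, using $w^{2N-2}=\sigma^{(2N-2)/N}$, with $\sigma(0)=\sigma_0$. Before integrating I would record the qualitative picture that makes everything well posed: $\sigma=0$ and $\sigma=\sigma^*$ are the only equilibria, and since $\sigma^*>0$ and $\sigma_0>0$, the trajectory stays in the open interval with endpoints $\min(\sigma_0,\sigma^*)$ and $\max(\sigma_0,\sigma^*)$ and converges monotonically to $\sigma^*$. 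In particular $\sigma(t)$ and $w(t)$ remain strictly positive for all $t\ge 0$, so the reparameterization is legitimate and the integrals below never meet a singularity.

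Then I integrate the separable equation $\frac{d\sigma}{\sigma^{2-2/N}(\sigma^*-\sigma)}=dt$ in each case. For $N=1$ it is the linear equation $\dot\sigma=\sigma^*-\sigma$, which integrates immediately to $\sigma(t)=\sigma^*+(\sigma_0-\sigma^*)e^{-t}$. For $N=2$ it is the logistic-type equation $\dot\sigma=\sigma(\sigma^*-\sigma)$; the partial-fraction identity $\frac{1}{\sigma(\sigma^*-\sigma)}=\frac{1}{\sigma^*}\left(\frac1\sigma+\frac1{\sigma^*-\sigma}\right)$ gives $\ln\frac{\sigma}{\sigma^*-\sigma}=\sigma^* t+\mathrm{const}$, and solving for $\sigma$ with $\sigma(0)=\sigma_0$ yields the stated closed form. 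For $N\to\infty$ the right-hand side converges to $\sigma^2(\sigma^*-\sigma)$, whose partial-fraction decomposition has a double pole, $\frac{1}{\sigma^2(\sigma^*-\sigma)}=\frac{1}{(\sigma^*)^2\sigma}+\frac{1}{\sigma^*\sigma^2}+\frac{1}{(\sigma^*)^2(\sigma^*-\sigma)}$; integrating term by term from $\sigma_0$ to $\sigma(t)$ produces the logarithmic-plus-reciprocal implicit relation for $t$ in the statement (the term $\frac{1}{\sigma^*\sigma^2}$ contributing the $-\frac{1}{\sigma^*}(\frac{1}{\sigma(t)}-\frac{1}{\sigma_0})$ piece, the other two the logarithm).

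The computation is essentially routine; the only genuine care is the $N\to\infty$ case, where one must justify passing to the limit. I would do this via continuous dependence on parameters, since the a priori bounds from the second step confine $\sigma$ to a fixed compact interval on which $\sigma^{2-2/N}(\sigma^*-\sigma)$ converges uniformly to $\sigma^2(\sigma^*-\sigma)$ as $N\to\infty$; alternatively one could integrate the general-$N$ equation (whose antiderivative is an incomplete-Beta/hypergeometric object, matching the remark that $3\le N<\infty$ is "similar but more complicated") and take the limit afterward. I would also separately dispatch the degenerate case $\sigma_0=\sigma^*$ (constant solution) and keep track of the sign of $\sigma_0-\sigma^*$ so that the arguments of the logarithms are manifestly positive in both the $\sigma_0>\sigma^*$ and $\sigma_0<\sigma^*$ regimes.
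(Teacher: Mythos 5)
Your proposal is correct and follows essentially the same route as the paper: derive the gradient flow for $w_i$, change variables to $\sigma_i=w_i^N$ to obtain the decoupled separable ODE $\dot\sigma_i=\sigma_i^{2-2/N}(\sigma_i^*-\sigma_i)$, and integrate it case by case. You simply fill in the partial-fraction integrations and the positivity/monotonicity and $N\to\infty$ justifications that the paper leaves implicit, and your computations (including the implicit relation for the limiting case, which agrees with the stated formula after flipping signs inside the logarithm) check out.
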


\begin{proof}[Proof]
	
	The gradient flow equations for our model are the following:
	
	$$
	\dot{w}_{i} = -\nabla_{w_{i}}\ell = \frac{1}{N}w_{i}^{N-1}(\sigma_{i}^{*} - w_{i}^{N})
	$$
	
	Given the dynamics of the $w$ parameters, we may use the chain rule to derive the dynamics of the induced model, $\sigma$:
	
	\begin{equation} \label{eq:dynamical_equation}
	\dot{\sigma}_{i} = \frac{d\sigma_{i}}{dw_{i}}\dot{w}_{i} = w^{2N-2}(\sigma_{i}^{*} - w_{i}^{N}) = \sigma_{i}^{2-\frac{2}{N}}(\sigma_{i}^{*} - \sigma_{i})
	\end{equation}
	
	This differential equation is solvable for all $N$, leading to the solutions in the theorem. Taking $N \rightarrow \infty$ in \eqref{eq:dynamical_equation} leads to $\dot{\sigma}_{i} = \sigma_{i}^{2}(\sigma_{i}^{*} - \sigma_{i})$, which is also solvable.
	
\end{proof}

Analyzing these solutions, we see how even in such a simple model depth causes different factors of the model to be learned at different rates. Specifically, values corresponding to larger optimal values converge faster, suggesting a form of incremental learning. This is most clear for $N=2$ where the solution isn't implicit, but is also the case for $N\ge3$, as we will see in the next subsection. 

These dynamics are depicted in figure \ref{fig:toy_dynamics}, where we see the dynamics of the different values of $\sigma(t)$ as learning progresses. When $N=1$, all values are learned at the same rate regardless of the initialization, while the deeper models are clearly biased towards learning the larger singular values first, especially at small initialization scales. 

Our model has only one optimal solution due to the population loss, but it is clear how this sort of dynamic can induce sparse solutions - if the model is able to fit the data after a small amount of learning phases, then it's obtained result will be sparse. Alternatively, if $N=1$, we know that the dynamics will lead to the minimal $\ell_{2}$ norm solution which is dense. We explore the sparsity inducing bias of our toy model by comparing it empirically\footnote{The code for reproducing all of our experiments can be found in \url{https://github.com/dsgissin/Incremental-Learning}} to a greedy sparse approximation algorithm in appendix \ref{app:omp}, and give our theoretical results in the next section.

\begin{figure}
	\begin{center}
		\includegraphics[width=1\linewidth]{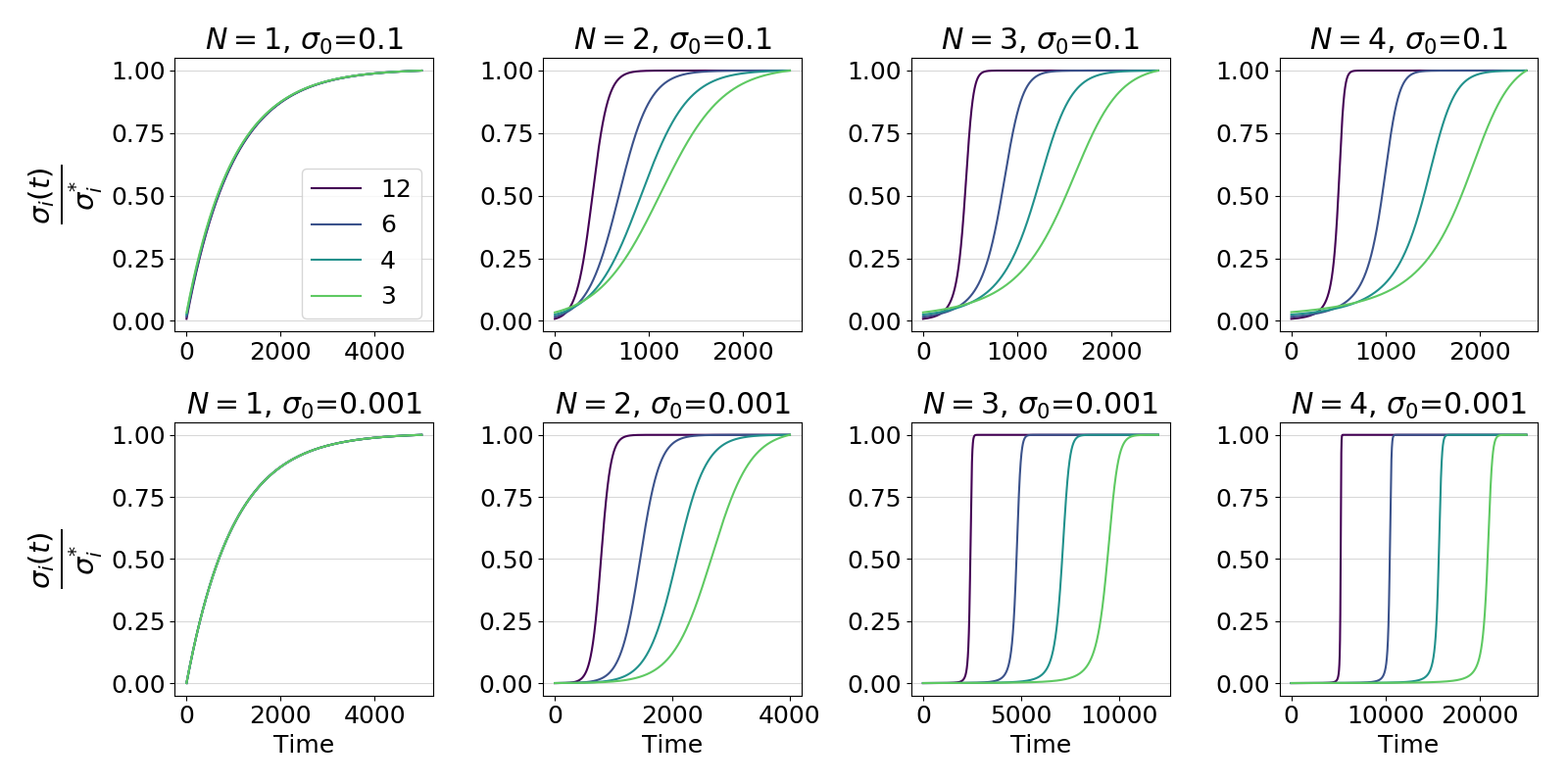}
	\end{center}
	\caption{
		Incremental learning dynamics in the toy model. Each panel shows the evolution of $\frac{\sigma_{i}(t)}{\sigma_{i}^{*}}$ for $\sigma_{i}^{*} \in \{12,6,4,3\}$ according to the analytical solutions in theorem \ref{thm:analytical_solution}, under different depths and initializations. The first column has all values converging at the same rate. 
		Notice how the deep parameterization with small initialization leads to distinct phases of learning, where values are learned incrementally (bottom-right). The shallow model's much weaker incremental learning, even at small initialization scales (second column), is explained in theorem \ref{thm:depth_separation}.
	}
	\label{fig:toy_dynamics}
\end{figure}

\section{Incremental Learning} \label{sec:incremental_learning}

Equipped with analytical solutions for the dynamics of our model for every depth, we turn to study how the depth and initialization effect incremental learning. While \citet{gidel2019implicit} focuses on incremental learning in depth-2 models at the limit of $\sigma_{0}\rightarrow 0$, we will study the phenomenon for a general depth and for $\sigma_{0} > 0$.

First, we will define the notion of incremental learning. Since all values of $\sigma$ are learned in parallel, we can't expect one value to converge before the other moves at all (which happens for infinitesimal initialization as shown by \citet{gidel2019implicit}). We will need a more relaxed definition for incremental learning in finite initialization scales.

\begin{definition} \label{def:incremental_learning}
	Given two values $\sigma_{i}, \sigma_{j}$ such that $\sigma_{i}^{*} > \sigma_{j}^{*} > 0$ and both are initialized as $\sigma_{i}(0)=\sigma_{j}(0)=\sigma_{0}<\sigma_{j}^{*}$, and given two scalars $s \in (0,\frac{1}{4})$ and $f \in (\frac{3}{4},1)$, we call the learning of the values $(s,f)$-incremental if there exists a $t$ for which: 
	
	\begin{equation*}
	\sigma_{j}(t) \le s\sigma_{j}^{*} < f\sigma_{i}^{*} \le \sigma_{i}(t)
	\end{equation*}
	
\end{definition}

In words, two values have distinct learning phases if the first almost converges ($f\approx1$) before the second changes by much ($s \ll 1$). Note that for any $N$, $\sigma(t)$ is monotonically increasing and so once $\sigma_{j}(t)=s\sigma_{j}^{*}$, it will not decrease to allow further incremental learning. Given this definition of incremental learning, we turn to study the conditions that facilitate incremental learning in our toy model.

Our main result is a dynamical depth separation result, showing that incremental learning is dependent on $\frac{\sigma_{i}^{*}}{\sigma_{j}^{*}}$ in different ways for different values of $N$. The largest difference in dependence happens between $N=2$ and $N=3$, where the dependence changes from exponential to polynomial:

\begin{theorem} \label{thm:depth_separation}
	Given two values $\sigma_{i}, \sigma_{j}$ of a toy linear model as in \eqref{eq:model}, where $\frac{\sigma_{i}^{*}}{\sigma_{j}^{*}} = r>1$ and the model is initialized as in \eqref{eq:init}, and given two scalars $s \in (0,\frac{1}{4})$ and $f \in (\frac{3}{4},1)$, then the largest initialization value for which the learning phases of the values are $(s,f)$-incremental, denoted $\sigma_{0}^{th}$, is bounded in the following way:
	
	\begin{align}
	s\sigma_{j}^{*} \Big( \frac{s}{rf} \Big)^{\frac{1}{(1-f)(r-1)}} \le & \sigma_{0}^{th} \le s\sigma_{j}^{*} \Big( \frac{s}{rf} \Big)^{\frac{1-s}{r-1}} & N=2\\
	s\sigma_{j}^{*} \Big( \frac{(1-f)(r-1)}{1 + (1-f)(r-1)} \Big)^{\frac{N}{N-2}} \le & \sigma_{0}^{th} \le s\sigma_{j}^{*} \Big( \frac{r-1}{r-s} \Big)^{\frac{N}{N-2}} & N \ge 3
	\end{align}
	
\end{theorem}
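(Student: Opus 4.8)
The strategy is to translate the $(s,f)$-incremental condition from Definition~\ref{def:incremental_learning} into a statement purely about the trajectory of a single coordinate, using the analytical solutions (or the implicit dynamical equation~\eqref{eq:dynamical_equation}) from Theorem~\ref{thm:analytical_solution}. The key observation is that both $\sigma_i$ and $\sigma_j$ start at the same value $\sigma_0$ and evolve under the same ODE $\dot\sigma = \sigma^{2-2/N}(\sigma^* - \sigma)$ but with different fixed points $\sigma_i^* = r\sigma_j^*$ and $\sigma_j^*$. Incremental learning at threshold means: at the moment $t^\star$ when $\sigma_i$ first reaches $f\sigma_i^*$, we need $\sigma_j(t^\star) \le s\sigma_j^*$. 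Since both flows are monotone, $\sigma_0^{th}$ is exactly the initialization for which $\sigma_j$ reaches $s\sigma_j^*$ at the \emph{same} time that $\sigma_i$ reaches $f\sigma_i^*$; larger $\sigma_0$ speeds both up but the comparison tips the wrong way. So I would (i) write $t_i(\sigma_0)$ = time for the $i$-flow to go from $\sigma_0$ to $f\sigma_i^*$, and $t_j(\sigma_0)$ = time for the $j$-flow to go from $\sigma_0$ to $s\sigma_j^*$, each as an explicit integral $\int \frac{d\sigma}{\sigma^{2-2/N}(\sigma^*-\sigma)}$; (ii) characterize $\sigma_0^{th}$ by the equation $t_i(\sigma_0^{th}) = t_j(\sigma_0^{th})$ (with a monotonicity argument that for $\sigma_0 < \sigma_0^{th}$ the $i$-flow wins).

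For $N=2$ the integral is elementary: from the closed form $\sigma(t) = \frac{\sigma_0\sigma^* e^{\sigma^* t}}{\sigma_0(e^{\sigma^* t}-1)+\sigma^*}$ one solves for $t$ as a function of $\sigma/\sigma^*$, getting $\sigma^* t = \log\frac{(\sigma/\sigma^*)(1-\sigma_0/\sigma^*)}{(\sigma_0/\sigma^*)(1-\sigma/\sigma^*)}$. Setting $t_i = t_j$ and writing $u = \sigma_0/\sigma_j^*$ yields an equation of the form $\frac{1}{r}\log\frac{f(1-u/r)}{(u/r)(1-f)} = \log\frac{s(1-u)}{u(1-s)}$, i.e. a relation $u^{r-1} \asymp$ (constants involving $s,f,r$) times lower-order factors $(1-u)$, $(1-u/r)$. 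The two-sided bound in the theorem then comes from bounding the nuisance factors $(1-u)^{\pm}$ and $(1-u/r)^{\pm}$ between their extreme values ($u\in(0,s)$ roughly, so $1-u\in(1-s,1)$), turning the transcendental equation into the clean power-law sandwich $s\sigma_j^*(s/rf)^{1/((1-f)(r-1))} \le \sigma_0^{th} \le s\sigma_j^*(s/rf)^{(1-s)/(r-1)}$. For $N\ge 3$ the integrand $\frac{1}{\sigma^{2-2/N}(\sigma^*-\sigma)}$ has a fractional power, but the \emph{dominant} contribution as $\sigma_0\to 0$ comes from the $\sigma^{2-2/N}$ factor near the lower limit; rescaling $\sigma = \sigma^* v$ the integral behaves like $(\sigma^*)^{-3+2/N}\int_{v_0}^{v_1} v^{-2+2/N}(1-v)^{-1}dv$, and the leading term is $\frac{N}{N-2}(\sigma^*)^{-3+2/N} v_0^{-1+2/N}$ up to a bounded correction. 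Equating $t_i = t_j$ and keeping track of the $\sigma^*$ prefactors (which carry the $r$-dependence: $\sigma_i^* = r\sigma_j^*$), the equation becomes $r^{-3+2/N}(\sigma_0/r\sigma_j^*)^{-1+2/N}(\text{corr}_i) = (\sigma_0/\sigma_j^*)^{-1+2/N}(\text{corr}_j)$, and solving for $\sigma_0$ gives a power $(\sigma_0^{th}/s\sigma_j^*) \asymp (\text{ratio of corrections})^{N/(N-2)}$; bounding the correction ratio between $(1-f)(r-1)/(1+(1-f)(r-1))$ and $(r-1)/(r-s)$ yields the stated sandwich.

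The main obstacle I expect is making the "bounded correction" step rigorous for $N\ge 3$ — i.e., controlling the integral $\int_{v_0}^{v_1} v^{-2+2/N}(1-v)^{-1}dv$ well enough to extract two-sided bounds with the \emph{exact} constants appearing in the theorem, rather than just an asymptotic order. The clean way is to bound $(1-v)^{-1}$ on the relevant range: on the $j$-flow $v$ ranges in $[v_0, s]$ so $(1-v)^{-1}\in[1,(1-s)^{-1}]$, while on the $i$-flow $v\in[v_0, f]$ so $(1-v)^{-1}\ge 1$ and the upper part near $v=f$ must be handled by noting $\int v^{-2+2/N}(1-v)^{-1}dv \le \frac{1}{1-f}\int v^{-2+2/N}dv$ on $[0,f]$, plus recognizing that the lower endpoint term $v_0^{-1+2/N}$ dominates as $v_0\to 0$. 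Threading these inequalities through the equation $t_i(\sigma_0^{th})=t_j(\sigma_0^{th})$ — and separately verifying the monotonicity claim that guarantees $\sigma_0^{th}$ is well-defined as a threshold (larger initialization destroys incrementality) — is the bulk of the work; the $N=2$ case is a cleaner warm-up where everything is explicit. I would also double-check the edge constraints $s<\tfrac14$, $f>\tfrac34$ only enter to ensure $\sigma_0^{th}<\sigma_j^*$ and that the nuisance factors stay safely bounded away from their singularities.
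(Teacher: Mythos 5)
Your proposal follows essentially the same route as the paper's proof: express the $\alpha$-times as integrals $\int_{\sigma_0}^{\alpha\sigma^*}\frac{d\sigma}{\sigma^{2-2/N}(\sigma^*-\sigma)}$, reduce the $(s,f)$-incremental condition to $t_f(\sigma_i)\le t_s(\sigma_j)$, bound the nuisance factors $(1-\sigma/\sigma^*)^{-1}$ by their extreme values on the relevant integration ranges (using $1-f$ on the $i$-flow and $1$ or $(1-s)^{-1}$ on the $j$-flow), and solve the resulting elementary power integrals to isolate $\sigma_0$, with the $\tfrac{N}{N-2}$ exponent emerging exactly as you describe. The only cosmetic difference is that the paper derives separate sufficient and necessary conditions of the form $\sigma_0\le(\cdot)$ (splitting the $i$-integral at $s\sigma_j^*$ to share a common piece with the $j$-integral) rather than working with the equation $t_i(\sigma_0^{th})=t_j(\sigma_0^{th})$, which lets it avoid proving your monotonicity-in-$\sigma_0$ claim.
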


\begin{proof}[Proof sketch (the full proof is given in appendix \ref{app:incremental_learning})]
	
	Rewriting the separable differential equation in \eqref{eq:dynamical_equation} to calculate the time until $\sigma(t) = \alpha\sigma^{*}$, we get the following:
	
	$$
	t_{\alpha}(\sigma) = \int_{\sigma_{0}}^{\alpha \sigma^{*}}\frac{d\sigma}{\sigma^{2-\frac{2}{N}}(\sigma^{*} - \sigma)}
	$$
	
	The condition for incremental learning is then the requirement that $t_{f}(\sigma_{i}) \le t_{s}(\sigma_{j})$, resulting in:
	
	$$
	\int_{\sigma_{0}}^{f \sigma_{i}^{*}}\frac{d\sigma}{\sigma^{2-\frac{2}{N}}(\sigma_{i}^{*} - \sigma)} \le \int_{\sigma_{0}}^{s \sigma_{j}^{*}}\frac{d\sigma}{\sigma^{2-\frac{2}{N}}(\sigma_{j}^{*} - \sigma)}
	$$
	
	We then relax/restrict the above condition to get a necessary/sufficient condition on $\sigma_{0}$, leading to a lower and upper bound on $\sigma_{0}^{th}$.
	
\end{proof}

Note that the value determining the condition for incremental learning is $\frac{\sigma_{i}^{*}}{\sigma_{j}^{*}}$ - if two values are in the same order of magnitude, then their ratio will be close to $1$ and we will need a small initialization to obtain incremental learning. The dependence on the ratio changes with depth, and is exponential for $N=2$. This means that incremental learning, while possible for shallow models, is difficult to see in practice. This result explains why changing the initialization scale in figure \ref{fig:toy_dynamics} changes the dynamics of the $N \ge 3$ models, while not changing the dynamics for $N=2$ noticeably.

The next theorem extends part of our analysis to gradient descent, a more realistic setting than the infinitesimal learning rate of gradient flow:

\begin{theorem} \label{thm:gradient_descent_2}
	Given two values $\sigma_{i}, \sigma_{j}$ of a depth-2 toy linear model as in \eqref{eq:model}, such that $\frac{\sigma_{i}^{*}}{\sigma_{j}^{*}} = r>1$ and the model is initialized as in \eqref{eq:init}, and given two scalars $s \in (0,\frac{1}{4})$ and $f \in (\frac{3}{4},1)$, and assuming $\sigma_{j}^{*} \ge 2\sigma_{0}$, and assuming we optimize with gradient descent with a learning rate $\eta \le \frac{c}{\sigma_{1}^{*}}$ for $c<2(\sqrt{2}-1)$ and $\sigma_{1}^{*}$ the largest value of $\sigma^{*}$, then the largest initialization value for which the learning phases of the values are $(s,f)$-incremental, denoted $\sigma_{0}^{th}$, is lower and upper bounded in the following way:
	
	\begin{equation*}
	\frac{1}{2} \frac{s}{1-s}\sigma_{j}^{*} \Big( \frac{1-f}{2rf} \frac{s}{1-s} \Big)^{\frac{1}{A-1}} \le \sigma_{0}^{th} \le \frac{s}{1-s}\sigma_{j}^{*} \Big( \frac{1-f}{f} \frac{s}{1-s} \Big)^{\frac{1}{B-1}}
	\end{equation*}
	
	Where $A$ and $B$ are defined as:
	
	\begin{equation*}
	A = \frac{\log\big( 1 - c\frac{\sigma_{i}^{*}}{\sigma_{1}^{*}} + c^{2}\big(\frac{\sigma_{i}^{*}}{\sigma_{1}^{*}}\big)^{2} \big)}{\log\big( 1 - c\frac{\sigma_{j}^{*}}{\sigma_{1}^{*}} - \frac{c^{2}}{4}\big(\frac{\sigma_{j}^{*}}{\sigma_{1}^{*}}\big)^{2} \big)} \qquad
	B = \frac{\log\big( 1 - c\frac{\sigma_{i}^{*}}{\sigma_{1}^{*}} - \frac{c^{2}}{4}\big(\frac{\sigma_{i}^{*}}{\sigma_{1}^{*}}\big)^{2} \big)}{\log\big( 1 - c\frac{\sigma_{j}^{*}}{\sigma_{1}^{*}} + c^{2}\big(\frac{\sigma_{j}^{*}}{\sigma_{1}^{*}}\big)^{2} \big)}
	\end{equation*}
	
\end{theorem}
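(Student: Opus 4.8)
The plan is to run the hitting-time comparison behind Theorem~\ref{thm:depth_separation}, but for the \emph{discrete} trajectory, controlling the $O(\eta^{2})$ terms that the gradient step adds to the flow \eqref{eq:dynamical_equation}. For $N=2$ one gradient step gives $w_{i}(t+1)=w_{i}(t)\big(1+\tfrac{\eta}{2}(\sigma_{i}^{*}-w_{i}(t)^{2})\big)$, so in the induced coordinate $\sigma_{i}=w_{i}^{2}$ we get $\sigma_{i}(t+1)=\sigma_{i}(t)\big(1+\tfrac{\eta}{2}(\sigma_{i}^{*}-\sigma_{i}(t))\big)^{2}$; writing $\delta_{i}=\sigma_{i}^{*}-\sigma_{i}$ this reads $\delta_{i}(t+1)=\delta_{i}(t)\big(1-\eta\sigma_{i}(t)-\tfrac{\eta^{2}}{4}\sigma_{i}(t)\delta_{i}(t)\big)$, and hence the logistic variable $\phi_{i}=\sigma_{i}/\delta_{i}$ — the discrete analogue of the quantity that linearizes the $N=2$ flow in Theorem~\ref{thm:analytical_solution} — obeys a purely multiplicative law $\phi_{i}(t+1)=\rho_{i}(t)\,\phi_{i}(t)$ with $\rho_{i}(t)=\big(1+\tfrac{\eta}{2}\delta_{i}(t)\big)^{2}\big/\big(1-\eta\sigma_{i}(t)-\tfrac{\eta^{2}}{4}\sigma_{i}(t)\delta_{i}(t)\big)$.

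First I would record the invariants: under $\eta\le c/\sigma_{1}^{*}$ an induction shows each $\sigma_{i}(t)$ is strictly increasing with $\sigma_{i}(t)<\sigma_{i}^{*}$, so $\delta_{i}(t)>0$ is decreasing and the hitting indices $T_{i}=\min\{t:\sigma_{i}(t)\ge f\sigma_{i}^{*}\}$, $T_{j}=\min\{t:\sigma_{j}(t)>s\sigma_{j}^{*}\}$ are finite; requiring $c<2(\sqrt2-1)$ is exactly requiring $1-c-\tfrac{c^{2}}{4}>0$, which is what keeps $1-c\tfrac{\sigma_{j}^{*}}{\sigma_{1}^{*}}-\tfrac{c^{2}}{4}(\tfrac{\sigma_{j}^{*}}{\sigma_{1}^{*}})^{2}$ — and every other quantity that will land inside a logarithm — strictly positive. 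Next, since $s<\tfrac14<\tfrac34<f$ and $\sigma_{i}^{*}>\sigma_{j}^{*}$ force $s\sigma_{j}^{*}<f\sigma_{i}^{*}$ automatically, Definition~\ref{def:incremental_learning} holds iff some $t$ satisfies $\sigma_{j}(t)\le s\sigma_{j}^{*}$ and $\sigma_{i}(t)\ge f\sigma_{i}^{*}$, which by monotonicity is simply $T_{i}<T_{j}$. In $\phi$-coordinates these thresholds are $\phi_{i}=\tfrac{f}{1-f}$ and $\phi_{j}=\tfrac{s}{1-s}$, while $\phi_{i}(0)=\sigma_{0}/(\sigma_{i}^{*}-\sigma_{0})$ and $\phi_{j}(0)=\sigma_{0}/(\sigma_{j}^{*}-\sigma_{0})$ are pinned to within a factor $2$ by the hypothesis $\sigma_{j}^{*}\ge 2\sigma_{0}$ — the source of the leading $\tfrac12$ in the lower bound.

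The heart of the proof is two-sided control of $\rho_{i}(t)$ along the relevant segment. On $\sigma_{i}\le f\sigma_{i}^{*}$, where $\delta_{i}\in[(1-f)\sigma_{i}^{*},\sigma_{i}^{*}]$, one bounds the exact rational function $\rho_{i}(t)$ (not merely its $\eta$-expansion) to obtain
\begin{equation*}
\Big(1-c\tfrac{\sigma_{i}^{*}}{\sigma_{1}^{*}}+c^{2}\big(\tfrac{\sigma_{i}^{*}}{\sigma_{1}^{*}}\big)^{2}\Big)^{-1}\ \le\ \rho_{i}(t)\ \le\ \Big(1-c\tfrac{\sigma_{i}^{*}}{\sigma_{1}^{*}}-\tfrac{c^{2}}{4}\big(\tfrac{\sigma_{i}^{*}}{\sigma_{1}^{*}}\big)^{2}\Big)^{-1},
\end{equation*}
and likewise for $\rho_{j}$ on $\sigma_{j}\le s\sigma_{j}^{*}$, with the $O(\eta^{2})$ correction signed conservatively in each direction. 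Telescoping $\phi_{i}(T_{i})=\phi_{i}(0)\prod_{t<T_{i}}\rho_{i}(t)$ then bounds $T_{i}$ from above (using the lower estimate for $\rho_{i}$) and $T_{j}$ from below (using the upper estimate for $\rho_{j}$); the inequality $T_{i}<T_{j}$ becomes linear in $\log(1/\sigma_{0})$ with slope $A-1$, and solving for $\sigma_{0}$ after inserting the bounds on $\phi_{i}(0),\phi_{j}(0)$ gives the stated lower bound on $\sigma_{0}^{th}$. Swapping the two estimates in each place produces slope $B-1$ and the upper bound. As a check, as $\eta\to0$ one has $c\to0$ and $A,B\to\sigma_{i}^{*}/\sigma_{j}^{*}=r$, reproducing the role of the exponent in the $N=2$ case of Theorem~\ref{thm:depth_separation}.

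The main obstacle is the $\rho$-estimate above: it must be a rigorous bound on the exact rational function, valid uniformly over the whole segment (not just near its endpoints), and one has to pick the right conservative sign for the $O(\eta^{2})$ term in each of the four logarithm-arguments assembling $A$ and $B$ — which is why $A$ carries $+c^{2}(\cdot)^{2}$ on top and $-\tfrac{c^{2}}{4}(\cdot)^{2}$ on the bottom and $B$ the reverse. The remaining bookkeeping — the step that overshoots a threshold, the ceilings in the step counts, and the strict-versus-non-strict inequality defining $T_{j}$ — must be absorbed without degrading the clean prefactors; otherwise the argument follows the relax/restrict template used for Theorem~\ref{thm:depth_separation} in appendix~\ref{app:incremental_learning}, with sums in place of integrals.
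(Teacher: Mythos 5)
Your proposal is correct and follows essentially the same route as the paper's proof: the paper also discretizes the depth-2 update, derives the same per-step geometric bounds (working with $\tfrac{1}{\sigma_i}-\tfrac{1}{\sigma_i^*}$, which is just the reciprocal of your $\phi_i$ up to the constant $\sigma_i^*$, and using $\tfrac{1}{1+x}\ge 1-x$ and $\tfrac{1}{1+x}\le 1-x+x^2$ to get exactly your two rate factors), converts them into upper/lower bounds on the hitting times $t_f(\sigma_i)$, $t_s(\sigma_j)$, and compares them to produce the sufficient and necessary conditions defining $A$ and $B$, with $\sigma_j^*\ge 2\sigma_0$ supplying the factor $\tfrac12$ exactly as you indicate. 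The only cosmetic differences are your multiplicative $\phi$-formulation versus the paper's reciprocal variable, and that the paper's per-step bounds are established uniformly in $t$ (given the no-overshoot lemma) rather than only on the pre-threshold segment.
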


We defer the proof to appendix \ref{app:gradient_descent}. 

Note that this result, while less elegant than the bounds of the gradient flow analysis, is similar in nature. Both $A$ and $B$ simplify to $r$ when we take their first order approximation around $c=0$, giving us similar bounds and showing that the condition on $\sigma_{0}$ for $N=2$ is exponential in gradient descent as well.

While similar gradient descent results are harder to obtain for deeper models, we discuss the general effect of depth on the gradient decent dynamics in appendix \ref{app:gradient_descent_depth}.

\section{Incremental Learning in Larger Models} \label{sec:models}

So far, we have only shown interesting properties of incremental learning caused by depth for a toy model. In this section, we will relate several deep models to our toy model and show how incremental learning presents itself in larger models as well.

\subsection{Matrix Sensing} \label{sec:matrix_sensing}

The task of matrix sensing is a generalization of matrix completion, where our input space is $\mathcal{X}=\mathbb{R}^{d \times d}$ and our model is a matrix $W \in \mathbb{R}^{d \times d}$, such that:

\begin{equation}
f_{W}(A) = \langle W, A \rangle = tr\big( W^{T}A \big)
\end{equation}

Following \citet{arora2019implicit}, we introduce depth by parameterizing the model using a product of matrices and the following initialization scheme ($W_{i} \in \mathbb{R}^{d \times d}$):

\begin{align} \label{eq:sensing_model}
W & = W_{N}W_{N-1} \cdots W_{1} \\
\forall i\in[N], \; W_{i}(0) & = \sqrt[N]{\sigma_{0}} I
\end{align}

Note that when $d=1$, the deep matrix sensing model reduces to our toy model without weight sharing. We study the dynamics of the model under gradient flow over a depth-normalized squared loss, assuming the data is labeled by a matrix sensing model parameterized by a PSD $W^{*} \in \mathbb{R}^{d \times d}$:

\begin{align} \label{eq:sense_loss}
\begin{split}
\ell_{N}(W_{N}W_{N-1} \cdots W_{1}) = & \frac{1}{2N} \mathbb{E}_{A}[(\langle \big(W_{N}W_{N-1} \cdots W_{1}\big) - W^{*},A \rangle ^{2}] \\
= & \frac{1}{2N} ||\big(W_{N}W_{N-1} \cdots W_{1}\big) - W^{*}||_{F}^{2}
\end{split}
\end{align}

The following theorem relates the deep matrix sensing model to our toy model, showing the two have the same dynamical equations:

\begin{theorem} \label{thm:matrix_sensing}
	Optimizing the deep matrix sensing model described in \eqref{eq:sensing_model} with gradient flow over the depth normalized squared loss (\eqref{eq:sense_loss}), with weights initialized as in \eqref{eq:sensing_model} leads to the following dynamical equations for different values of $N$:
	
	\begin{equation}
	\dot{\sigma}_{i}(t) = \sigma_{i}(t)^{2-\frac{2}{N}}(\sigma_{i}^{*} - \sigma_{i}(t))
	\end{equation}
	
	Where $\sigma_{i}$ and $\sigma_{i}^{*}$ are the $i$th singular values of $W$ and $W^{*}$, respectively, corresponding to the same singular vector.
\end{theorem}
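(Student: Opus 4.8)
The plan is to specialize the analysis of deep matrix factorization dynamics of \citet{arora2019implicit} to the isotropic initialization $W_i(0) = \sqrt[N]{\sigma_0}\,I$, under which the end-to-end matrix $W$ remains simultaneously diagonalizable with $W^*$ and each singular value evolves according to the scalar ODE \eqref{eq:dynamical_equation}.

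First I would write the gradient-flow equations for the individual factors. Differentiating \eqref{eq:sense_loss} gives $\nabla_{W_i}\ell_N = \tfrac1N\,(W_N\cdots W_{i+1})^{T}(W-W^*)(W_{i-1}\cdots W_1)^{T}$, so $\dot W_i = -\nabla_{W_i}\ell_N$. Next I would establish conservation of balancedness: for each $i$, $\tfrac{d}{dt}\big(W_{i+1}^{T}W_{i+1} - W_i W_i^{T}\big) = 0$ along the flow, so this difference is constant in $t$; since $W_i(0) = \sqrt[N]{\sigma_0}\,I$ for every $i$, it vanishes identically, giving $W_{i+1}^{T}W_{i+1} = W_i W_i^{T}$ for all $i$ and all $t$. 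Using these identities to re-express the partial products $W_N\cdots W_{i+1}$ and $W_{i-1}\cdots W_1$ in terms of the end-to-end matrix $W$ (the computation carried out in \citet{arora2019implicit}) yields the closed dynamical equation $\dot W = -\sum_{j=1}^{N}(WW^{T})^{\frac{N-j}{N}}\,\nabla_W\ell_N\,(W^{T}W)^{\frac{j-1}{N}}$, where $\nabla_W\ell_N = \tfrac1N(W-W^*)$.

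Then, using that $W^*$ is PSD and $W(0) = (\sqrt[N]{\sigma_0})^{N}I = \sigma_0 I$, I would show that $W(t)$ stays symmetric, PSD, and commuting with $W^*$ for all $t$. On this set $WW^{T} = W^{T}W = W^{2}$ and every fractional power commutes with both $W$ and $W^*$, so the right-hand side collapses to $-(W-W^*)W^{2-\frac2N}$ (the $\tfrac1N$ cancelling against the $N$ summands), which is again symmetric and commutes with $W^*$; hence the vector field is tangent to this set, and by uniqueness of ODE solutions the flow remains on it, with boundedness of the induced scalar dynamics ruling out blow-up or loss of positivity. Finally, diagonalizing $W$ and $W^*$ in a common orthonormal eigenbasis and reading off the coordinate along the eigenvector carrying $\sigma_i^*$, the singular value $\sigma_i(t)$ of $W$ (equal to its eigenvalue, since $W$ is PSD, and similarly for $W^*$) satisfies $\dot\sigma_i = \sigma_i^{2-\frac2N}(\sigma_i^* - \sigma_i)$, which is \eqref{eq:dynamical_equation}, as claimed.

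I expect the main obstacle to be the middle step: reducing the per-factor flow to the closed end-to-end equation for $\dot W$ through the balancedness identities requires careful bookkeeping of transposes and of the interleaved powers of $WW^{T}$ and $W^{T}W$. The subsequent invariance argument is also delicate — one must check not only that the vector field is tangent to the set of symmetric PSD matrices commuting with $W^*$, but also that the trajectory stays bounded away from degeneracy, so that the reduction to the scalar ODE and the identification of $\sigma_i(t)$ with a singular value of $W$ remain valid throughout.
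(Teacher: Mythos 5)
Your proposal is correct, but it reaches the result by a genuinely different route than the paper. You go through the balancedness conservation law $\tfrac{d}{dt}\big(W_{i+1}^{T}W_{i+1}-W_iW_i^{T}\big)=0$ and the resulting closed end-to-end equation $\dot W=-\sum_{j}(WW^{T})^{\frac{N-j}{N}}\nabla_W\ell_N\,(W^{T}W)^{\frac{j-1}{N}}$ of \citet{arora2019implicit}, and only then restrict to the invariant set of symmetric PSD matrices commuting with $W^*$. The paper instead works at the level of the individual factors from the start: since each $W_n(0)=\sqrt[N]{\sigma_0}I$, the orthogonal matrix $U$ diagonalizing $W^*$ diagonalizes every factor at initialization, the per-factor flow $\dot D_n=\tfrac1N D_{1:n-1}(D^*-D)D_{n+1:N}$ preserves diagonality, the symmetric initialization forces all factors to share the same singular values $\sigma_{n,i}(t)=\sqrt[N]{\sigma_i(t)}$, and the product rule then gives $\dot\sigma_i=\sigma_i^{2-\frac2N}(\sigma_i^*-\sigma_i)$ directly. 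The paper's route is more elementary and entirely avoids fractional matrix powers (and hence the local-Lipschitz issue you would need for your uniqueness argument, which is fine here only because $\sigma_0>0$ and the scalar dynamics keep every eigenvalue strictly positive — you rightly flag this). Your route buys generality: once balancedness is established you never need to track the individual factors, and the argument would carry over to any balanced initialization whose end-to-end product is PSD and commutes with $W^*$, not just multiples of the identity. Both proofs ultimately rest on the same structural fact, namely that simultaneous diagonalizability with $W^*$ is preserved along the flow.
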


The proof follows that of \citet{saxe2013exact} and \citet{gidel2019implicit} and is deferred to appendix \ref{app:matrix_sensing}.

Theorem \ref{thm:matrix_sensing} shows us that the bias towards sparse solutions introduced by depth in the toy model is equivalent to the bias for low-rank solutions in the matrix sensing task. This bias was studied in a more general setting in \citet{arora2019implicit}, with empirical results supporting the effect of depth on the obtainment of low-rank solutions under a more natural loss and initialization scheme. We recreate and discuss these experiments and their connection to our analysis in appendix \ref{app:matrix_sensing}, and an example of these dynamics in deep matrix sensing can also be seen in panel (a) of figure \ref{fig:teaser}.

\subsection{Quadratic Neural Networks} \label{sec:quadratic_nets}

By drawing connections between quadratic networks and matrix sensing (as in \citet{soltanolkotabi2018theoretical}), we can extend our results to these nonlinear models. We will study a simplified quadratic network, where our input space is $\mathcal{X}=\mathbb{R}^{d}$ and the first layer is parameterized by a weight matrix $W\in\mathbb{R}^{d \times d}$ and followed by a quadratic activation function. The final layer will be a summation layer.  We assume, like before, that the labeling function is a quadratic network parameterized by $W^{*}\in \mathbb{R}^{d \times d}$. Our model can be written in the following way, using the following orthogonal initialization scheme:

\begin{align} \label{eq:quad_model}
f_{W}(x) & = \sum_{i=1}^{d} (w_{i}^{T}x)^{2} = x^{T}W^{T}Wx = \langle W^{T}W, xx^{T} \rangle \\
W_{0}^{T}W_{0} & = \sigma_{0} I
\end{align}

Immediately, we see the similarity of the quadratic network to the deep matrix sensing model with $N=2$, where the input space is made up of rank-1 matrices. However, the change in input space forces us to optimize over a different loss function to reproduce the same dynamics:

\begin{definition} \label{def:variance_loss}
	Given an input distribution over an input space $\mathcal{X}$ with a labeling function $y:\mathcal{X}\rightarrow\mathbb{R}$ and a hypothesis $h$, the variance loss is defined in the following way:
	
	$$ \ell_{var}(h) = \frac{1}{16} \mathbb{E}_{x}[(y(x) - h(x))^{2}] - \frac{1}{16} \mathbb{E}_{x}[y(x) - h(x)]^{2} $$
\end{definition}

Note that minimizing this loss function amounts to minimizing the variance of the error, while the squared loss minimizes the second moment of the error. We note that both loss functions have the same minimum for our problem, and the dynamics of the squared loss can be approximated in certain cases by the dynamics of the variance loss. For a complete discussion of the two losses, including the cases where the two losses have similar dynamics, we refer the reader to appendix \ref{app:quadratic_model}.

\begin{theorem} \label{thm:quadratic_model}
	Minimizing the quadratic network described and initialized as in \eqref{eq:quad_model} with gradient flow over the variance loss defined in \eqref{def:variance_loss} leads to the following dynamical equations:
	
	\begin{equation}
	\dot{\sigma}_{i}(t) = \sigma_{i}(t)(\sigma_{i}^{*} - \sigma_{i}(t))
	\end{equation}
	
	Where $\sigma_{i}$ and $\sigma_{i}^{*}$ are the $i$th singular values of $W$ and $W^{*}$, respectively, corresponding to the same singular vector.
\end{theorem}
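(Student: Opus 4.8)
The plan is to reduce the quadratic network to a symmetric depth-$2$ factorization and then reuse the spectral decoupling that drives Theorem~\ref{thm:matrix_sensing}. Write $M = W^{T}W$ and $M^{*} = W^{*T}W^{*}$ (both PSD), and set $E = M^{*}-M$, so that the residual on an input $x$ is $y(x)-f_{W}(x) = \langle E, xx^{T}\rangle = x^{T}Ex$. The first step is to evaluate the variance loss on Gaussian inputs: using $\E[x^{T}Ex] = \Tr(E)$ and $\E[(x^{T}Ex)^{2}] = (\Tr E)^{2} + 2\|E\|_{F}^{2}$ for $x\sim\mathcal{N}(0,I)$ and symmetric $E$, Definition~\ref{def:variance_loss} collapses to $\ell_{var}(W) = \tfrac18\|W^{T}W - W^{*T}W^{*}\|_{F}^{2}$. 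This is the only place Gaussianity is used, and it is precisely the step that separates the variance loss from the square loss for this model (cf.\ appendix~\ref{app:quadratic_model}).

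Next I would compute the gradient flow. A short matrix-calculus computation gives $\nabla_{W}\ell_{var} = \tfrac12 W(W^{T}W - W^{*T}W^{*})$, hence $\dot W = -\tfrac12 W(M - M^{*})$, and differentiating $M = W^{T}W$ yields the closed dynamics $\dot M = \tfrac12\big(M^{*}M + MM^{*} - 2M^{2}\big)$, which no longer references $W$ directly.

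The crux is to show that $M(t)$ and $M^{*}$ stay simultaneously diagonalizable for all $t$, using the orthogonal initialization $W_{0}^{T}W_{0} = \sigma_{0}I$, i.e.\ $M(0)=\sigma_{0}I$. Put $C(t) = [M(t),M^{*}]$; since $M^{*}$ is constant, differentiating the $\dot M$ equation gives the homogeneous linear ODE $\dot C = \tfrac12\big((M^{*}-2M)C + C(M^{*}-2M)\big)$ with $C(0)=[\sigma_{0}I,M^{*}]=0$, so by uniqueness $C(t)\equiv 0$. Therefore $M(t)$ preserves the eigenspaces of $M^{*}$, and in the fixed eigenbasis of $M^{*}$ the matrix equation reduces to $\dot M = (M^{*}-M)M$, decoupled coordinatewise. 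Denoting by $\sigma_{i}(t)$ the $i$-th eigenvalue of $W(t)^{T}W(t)$ and by $\sigma_{i}^{*}$ that of $W^{*T}W^{*}$ (i.e.\ the singular values of the induced quadratic forms, in the same sense as the product matrix in Theorem~\ref{thm:matrix_sensing}), this reads $\dot\sigma_{i} = \sigma_{i}(\sigma_{i}^{*}-\sigma_{i})$, which is the claim. The only bookkeeping issue is a possibly degenerate spectrum of $M^{*}$: on each repeated eigenspace one checks directly that the restricted flow $\dot M = \lambda^{*}M - M^{2}$ keeps $M$ a scalar multiple of the identity (as it starts), so the coordinatewise reduction still applies.

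I expect the main obstacle to be this third step — the whole statement rests on the orthogonal initialization forcing the trajectory into the ``aligned'' regime where the spectral dynamics are one-dimensional, and getting the commutator ODE, its vanishing, and the degenerate-spectrum case right is the substantive content. The loss reduction and the gradient/$\dot M$ derivations are routine, and once the decoupled scalar ODE is in hand the comparison with the $N=2$ toy model and with Theorem~\ref{thm:matrix_sensing} is immediate.
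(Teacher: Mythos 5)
Your proposal is correct and follows essentially the same route as the paper: evaluate the variance loss over Gaussian inputs to get $\tfrac18\|W^TW-W_*^TW_*\|_F^2$, derive $\dot W$ and then the closed dynamics for $M=W^TW$, use the isotropic initialization to keep $M$ and $M^*$ simultaneously diagonalizable, and read off the decoupled scalar ODEs. Your commutator ODE $\dot C=\tfrac12\bigl((M^*-2M)C+C(M^*-2M)\bigr)$ with $C(0)=0$ (which I checked is the correct identity) and the treatment of repeated eigenvalues are a more rigorous justification of the step the paper handles by asserting that the off-diagonal entries remain static, but the argument is the same in substance.
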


We defer the proof to appendix \ref{app:quadratic_model} and note that these dynamics are the same as our depth-2 toy model, showing that shallow quadratic networks can exhibit incremental learning (albeit requiring a small initialization).

\subsection{Diagonal/Convolutional Linear Networks} \label{sec:classification}

While incremental learning has been described for deep linear networks in the past, it has been restricted to regression tasks. Here, we illustrate how incremental learning presents itself in binary classification, where implicit bias results have so far focused on convergence at $t \rightarrow \infty$ (\citet{soudry2018implicit}, \citet{nacson2018convergence}, \citet{ji2019implicit}). Deep linear networks with diagonal weight matrices have been shown to be biased towards sparse solutions when $N>1$ in \citet{gunasekar2018implicit}, and biased towards the max-margin solution for $N=1$. Instead of analyzing convergence at $t \rightarrow \infty$, we intend to show that the model favors sparse solutions for the entire duration of optimization, and that this is due to the dynamics of incremental learning.

Our theoretical illustration will use our toy model as in \eqref{eq:model} (initialized as in \eqref{eq:init}) as a special weight-shared case of deep networks with diagonal weight matrices, and we will then show empirical results for the more general setting. We analyze the optimization dynamics of this model over a separable dataset $\{x_{i}, y_{i}\}_{i=1}^{m}$ where $y_{i}\in\{\pm1\}$. We use the exponential loss ($\ell(f(x),y) = e^{-yf(x)}$) for the theoretical illustration and experiment on the exponential and logistic losses.

Computing the gradient for the model over $w$, the gradient flow dynamics for $\sigma$ become:

\begin{equation}
\dot{\sigma}_{i} = \frac{N^{2}}{m} \sigma_{i}^{2-\frac{2}{N}} \sum_{j=1}^{m} e^{-y_{j} \langle \beta, x_{j} \rangle} x_{j}
\end{equation}

We see the same dynamical attenuation of small values of $\sigma$ that is seen in the regression model, caused by the multiplication by $\sigma_{i}^{2-\frac{2}{N}}$. From this, we can expect the same type of incremental learning to occur - weights of $\sigma$ will be learned incrementally until the dataset can be separated by the current support of $\sigma$. Then, the dynamics strengthen the growth of the current support while relatively attenuating that of the other values. Since the data is separated, increasing the values of the current support reduces the loss and the magnitude of subsequent gradients, and so we should expect the support to remain the same and the model to converge to a sparse solution.

Granted, the above description is just intuition, but panel (c) of figure \ref{fig:teaser} shows how it is born out in practice (similar results are obtained for the logistic loss). In appendix \ref{app:convolutional_model} we further explore this model, showing deeper networks have a stronger bias for sparsity. We also observe that the initialization scale plays a similar role as before - deep models are less biased towards sparsity when $\sigma_{0}$ is large.

In their work, \citet{gunasekar2018implicit} show an equivalence between the diagonal network and the circular-convolutional network in the frequency domain. According to their results, we should expect to see the same sparsity-bias of diagonal networks in convolutional networks, when looking at the Fourier coefficients of $\sigma$. An example of this can be seen in panel (d) of figure \ref{fig:teaser}, and we refer the reader to appendix \ref{app:convolutional_model} for a full discussion of their convolutional model and it's incremental learning dynamics.

\section{Conclusion} \label{sec:conclusion}

Gradient-based optimization for deep linear models has an implicit bias towards simple (sparse) solutions, caused by an incremental search strategy over the hypothesis space. Deeper models have a stronger tendency for incremental learning, exhibiting it in more realistic initialization scales.

This dynamical phenomenon exists for the entire optimization process for regression as well as classification tasks, and for many types of models - diagonal networks, convolutional networks, matrix completion and even the nonlinear quadratic network. We believe this kind of dynamical analysis may be able to shed light on the generalization of deeper nonlinear neural networks as well, with shallow quadratic networks being only a first step towards that goal.

\subsubsection*{Acknowledgments}
This research is supported by the European Research Council (TheoryDL project).

\bibliography{iclr2020_conference}
\bibliographystyle{iclr2020_conference}

\appendix

\section{Proof of Theorem \ref{thm:depth_separation}} \label{app:incremental_learning}

\begin{theorem*}
	Given two values $\sigma_{i}, \sigma_{j}$ of a toy linear model as in \eqref{eq:model}, such that $\frac{\sigma_{i}^{*}}{\sigma_{j}^{*}} = r>1$ and the model is initialized as in \eqref{eq:init}, and given two scalars $s \in (0,\frac{1}{4})$ and $f \in (\frac{3}{4},1)$, then the largest initialization value for which the learning phases of the values are $(s,f)$-incremental, denoted $\sigma_{0}^{th}$, is lower and upper bounded in the following way:
	
	\begin{align*}
	s\sigma_{j}^{*} \Big( \frac{s}{rf} \Big)^{\frac{1}{(1-f)(r-1)}} \le & \sigma_{0}^{th} \le s\sigma_{j}^{*} \Big( \frac{s}{rf} \Big)^{\frac{1-s}{r-1}} & N=2\\
	s\sigma_{j}^{*} \Big( \frac{(1-f)(r-1)}{1 + (1-f)(r-1)} \Big)^{\frac{N}{N-2}} \le & \sigma_{0}^{th} \le s\sigma_{j}^{*} \Big( \frac{r-1}{r-s} \Big)^{\frac{N}{N-2}} & N \ge 3
	\end{align*}
	
\end{theorem*}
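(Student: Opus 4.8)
The plan is to solve the scalar ODE \eqref{eq:dynamical_equation} by separation of variables, reduce the $(s,f)$-incremental condition to a comparison of two hitting times, and then extract explicit two-sided bounds on $\sigma_0^{th}$ by relaxing and restricting that comparison. \textbf{Step 1: reduce to a hitting-time inequality.} Since $\sigma_i(t),\sigma_j(t)$ are strictly increasing (the right-hand side of \eqref{eq:dynamical_equation} is positive while $\sigma<\sigma^*$), the event $\sigma_i(t)\ge f\sigma_i^*$ holds exactly for $t\ge t_f(\sigma_i)$ and $\sigma_j(t)\le s\sigma_j^*$ exactly for $t\le t_s(\sigma_j)$, where $t_\alpha(\sigma)=\int_{\sigma_0}^{\alpha\sigma^*}\frac{d\sigma}{\sigma^{2-\frac{2}{N}}(\sigma^*-\sigma)}$. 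Hence Definition \ref{def:incremental_learning} is satisfiable if and only if $t_f(\sigma_i)\le t_s(\sigma_j)$, so $\sigma_0^{th}$ is the largest $\sigma_0$ for which this holds (any admissible $\sigma_0$ must satisfy $\sigma_0<s\sigma_j^*$, else $\sigma_j$ starts above $s\sigma_j^*$). Substituting $\sigma_i^*=r\sigma_j^*$ turns the condition into a single inequality in the unknown $\sigma_0$.

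\textbf{Step 2: the case $N=2$.} Here $2-\frac{2}{N}=1$ and partial fractions give $\frac{1}{\sigma(\sigma^*-\sigma)}=\frac{1}{\sigma^*}\big(\frac{1}{\sigma}+\frac{1}{\sigma^*-\sigma}\big)$, so $t_\alpha(\sigma)=\frac{1}{\sigma^*}\ln\frac{\alpha(\sigma^*-\sigma_0)}{(1-\alpha)\sigma_0}$, consistent with the $N=2$ line of Theorem \ref{thm:analytical_solution}. The incremental condition becomes $\frac{1}{r}\ln\frac{f(\sigma_i^*-\sigma_0)}{(1-f)\sigma_0}\le\ln\frac{s(\sigma_j^*-\sigma_0)}{(1-s)\sigma_0}$, which is transcendental in $\sigma_0$. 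I would bound the factors $\sigma_i^*-\sigma_0$ and $\sigma_j^*-\sigma_0$ by their extreme values over the admissible range $\sigma_0\in(0,s\sigma_j^*)$ (e.g. $\sigma_j^*-\sigma_0\in[(1-s)\sigma_j^*,\sigma_j^*]$), and handle the logarithm of a near-$1$ ratio via $\ln(1+x)\le x$. Relaxing the condition (weakening the left side, strengthening the right) yields a \emph{necessary} condition of the form $\sigma_0\le s\sigma_j^*\big(\frac{s}{rf}\big)^{\frac{1-s}{r-1}}$, the exponent $\frac{1}{r-1}$ emerging after collecting the $\ln\sigma_0$ terms and exponentiating; restricting it in the opposite direction yields the matching \emph{sufficient} condition with exponent $\frac{1}{(1-f)(r-1)}$.

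\textbf{Step 3: the case $N\ge 3$.} Now the integrand has no elementary antiderivative, so I bound the slowly varying factor $\frac{1}{\sigma^*-\sigma}$ on the interval of integration: for $\sigma\in[\sigma_0,\alpha\sigma^*]$ it lies in $[\frac{1}{\sigma^*},\frac{1}{(1-\alpha)\sigma^*}]$. Pulling out the appropriate constant, the remaining integral is the power $\int_{\sigma_0}^{\alpha\sigma^*}\sigma^{\frac{2}{N}-2}\,d\sigma=\frac{N}{N-2}\big(\sigma_0^{\frac{2-N}{N}}-(\alpha\sigma^*)^{\frac{2-N}{N}}\big)$. The incremental inequality then becomes an inequality between such power expressions; since $\sigma_0^{(2-N)/N}$ dominates for small $\sigma_0$, isolating it and raising to the power $\frac{N}{N-2}$ produces the stated bounds, the bases $\frac{r-1}{r-s}$ (upper) and $\frac{(1-f)(r-1)}{1+(1-f)(r-1)}$ (lower) coming from combining the $(\alpha\sigma^*)^{(2-N)/N}$ terms for $\alpha=f$ on $\sigma_i^*=r\sigma_j^*$ and $\alpha=s$ on $\sigma_j^*$ after the $r$-substitution.

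\textbf{Main obstacle.} The routine part is the integration; the delicate part is bookkeeping the \emph{direction} of each relaxation so that the four estimates genuinely sandwich $\sigma_0^{th}$, and choosing relaxations tight enough to keep the upper and lower bounds of the same exponential (for $N=2$) or polynomial (for $N\ge 3$) order in $r$ while still collapsing to closed form. One also has to verify that the range assumptions used in the estimates — notably $\sigma_0<s\sigma_j^*$, and for $N=2$ that the ratios inside the logarithms stay in the regime where the $\ln(1+x)\le x$ step is applied in the correct direction — are consistent with the conclusions, so the argument is not circular.
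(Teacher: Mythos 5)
Your reduction to the hitting-time comparison $t_f(\sigma_i)\le t_s(\sigma_j)$ and the overall relax/restrict strategy are exactly the paper's. The gap is in how the \emph{specific stated constants} are supposed to emerge, and it is not mere bookkeeping. All four bounds in the theorem carry a factor $(r-1)$ (in the exponent for $N=2$, in the base for $N\ge3$), and that factor comes from a manipulation your plan omits: splitting $\int_{\sigma_0}^{fr\sigma_j^*}$ at $s\sigma_j^*$ and combining the common $[\sigma_0,s\sigma_j^*]$ piece with $r$ times the $\sigma_j$ integral, so that the coefficient multiplying $\int_{\sigma_0}^{s\sigma_j^*}\sigma^{2/N-2}\,d\sigma$ becomes $(r-1)$ (up to $\frac{1}{1-s}$ or $\frac{1}{1-f}$ corrections), while the $\frac{1}{1-f}$ estimate is applied only on $[s\sigma_j^*,fr\sigma_j^*]$. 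Without this cancellation your described routes produce genuinely different bounds. For $N=2$, the exact closed form $t_\alpha=\frac{1}{\sigma^*}\ln\frac{\alpha(\sigma^*-\sigma_0)}{(1-\alpha)\sigma_0}$ puts $(1-f)$ and $(1-s)$ only \emph{additively} inside the logarithms; after collecting the $\ln(1/\sigma_0)$ terms the coefficient is exactly $r-1$, so you get bounds of the form $C\cdot(\text{base})^{1/(r-1)}$ with $(1-f),(1-s)$ in the base --- there is no mechanism in your derivation for them to migrate into the exponent as $\frac{1}{(1-f)(r-1)}$ and $\frac{1-s}{r-1}$. The sufficient-condition bound you would obtain this way dominates the paper's lower bound, which is harmless, but the necessary-condition bound you would obtain is $\frac{s}{1-s}\sigma_j^*\big(\frac{s(1-f)}{(1-s)f(r-s)}\big)^{1/(r-1)}$, which for large $r$ \emph{exceeds} the stated $s\sigma_j^*\big(\frac{s}{rf}\big)^{(1-s)/(r-1)}$ (e.g.\ $s=0.1$, $f=0.9$, $r=100$ gives roughly $0.101$ versus $0.094$), so the theorem's upper bound does not follow from it.

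For $N\ge3$ the same omission is more serious on the sufficient side. Bounding $\frac{1}{\sigma_i^*-\sigma}$ by $\frac{1}{(1-f)\sigma_i^*}$ over the whole interval $[\sigma_0,f\sigma_i^*]$, as you propose, leads (after dropping the $(fr\sigma_j^*)^{-(N-2)/N}$ term) to the condition $(s\sigma_j^*)^{-(N-2)/N}\le\big(1-\frac{1}{(1-f)r}\big)\sigma_0^{-(N-2)/N}$, whose right-hand side is negative whenever $r<\frac{1}{1-f}$ (e.g.\ $f=0.9$, $r=5$); the resulting sufficient condition is then vacuous precisely in a regime the theorem covers, whereas the paper's base $\frac{(1-f)(r-1)}{1+(1-f)(r-1)}$ is always in $(0,1)$. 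The fix in both cases is the same and is the crux of the paper's argument: do not bound $1-\sigma/\sigma_i^*$ by $1-f$ on $[\sigma_0,s\sigma_j^*]$; instead pair that portion of the left integral against the right integral (bounding the respective factors there by $1$ and $\frac{1}{1-s}$, or by $\frac{r}{r-s}$ and $\frac{1}{1-s}$ for the necessary direction) and reserve the $\frac{1}{1-f}$ estimate for the $[s\sigma_j^*,fr\sigma_j^*]$ piece alone. As written, your plan proves valid but different sandwich bounds, not the stated ones.
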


\begin{proof}
	
	Our strategy will be to define the time $t_{\alpha}$ for which a value reaches a fraction $\alpha$ of it's optimal value, and then require that $t_{f}(\sigma_{i}) \le t_{s}(\sigma_{j})$. We begin with recalling the differential equation which determines the dynamics of the model:
	
	$$
	\dot{\sigma} = \sigma^{2-\frac{2}{N}}(\sigma^{*} - \sigma)
	$$
	
	Since the solution for $N\ge3$ is implicit and difficult to manage in a general form, we will define $t_{\alpha}$ using the integral of the differential equation. The equation is separable, and under initialization of $\sigma_{0}$ we can describe $t_{\alpha}(\sigma)$ in the following way:
	
	\begin{equation}
	t_{\alpha}(\sigma) = \int_{\sigma_{0}}^{\alpha \sigma^{*}}\frac{d\sigma}{\sigma^{2-\frac{2}{N}}(\sigma^{*} - \sigma)}
	\end{equation}
	
	Incremental learning takes place when $\sigma_{i}(t_{f})=f\sigma_{i}^{*}$ happens before $\sigma_{j}(t_{s})=s\sigma_{j}^{*}$. We can write this condition in the following way:
	
	\begin{equation*}
	\int_{\sigma_{0}}^{f \sigma_{i}^{*}}\frac{d\sigma}{\sigma^{2-\frac{2}{N}}(\sigma_{i}^{*} - \sigma)} \le \int_{\sigma_{0}}^{s \sigma_{j}^{*}}\frac{d\sigma}{\sigma^{2-\frac{2}{N}}(\sigma_{j}^{*} - \sigma)}
	\end{equation*}
	
	Plugging in $\sigma_{i} = r\sigma_{j}$ and rearranging, we get the following necessary and sufficient condition for incremental learning:
	
	\begin{equation}
	\int_{\sigma_{0}}^{fr \sigma_{j}^{*}}\frac{d\sigma}{\sigma^{2-\frac{2}{N}}(1 - \frac{\sigma}{r\sigma_{j}^{*}})} \le r \int_{\sigma_{0}}^{s \sigma_{j}^{*}}\frac{d\sigma}{\sigma^{2-\frac{2}{N}}(1 - \frac{\sigma}{\sigma_{j}^{*}})}
	\end{equation}
	
	Our last step before relaxing and restricting our condition will be to split the integral on the left-hand side into two integrals:
	
	\begin{equation} \label{eq:condition}
	\int_{\sigma_{0}}^{s \sigma_{j}^{*}}\frac{d\sigma}{\sigma^{2-\frac{2}{N}}(1 - \frac{\sigma}{r\sigma_{j}^{*}})} + \int_{s\sigma_{j}^{*}}^{fr \sigma_{j}^{*}}\frac{d\sigma}{\sigma^{2-\frac{2}{N}}(1 - \frac{\sigma}{r\sigma_{j}^{*}})} \le r \int_{\sigma_{0}}^{s \sigma_{j}^{*}}\frac{d\sigma}{\sigma^{2-\frac{2}{N}}(1 - \frac{\sigma}{\sigma_{j}^{*}})}
	\end{equation}
	
	At this point, we cannot solve this equation and isolate $\sigma_{0}$ to obtain a clear threshold condition on it for incremental learning. Instead, we will relax/restrict the above condition to get a necessary/sufficient condition on $\sigma_{0}$, leading to a lower and upper bound on the threshold value of $\sigma_{0}$.
	
	\subsection*{Sufficient Condition}
	
	To obtain a sufficient (but not necessary) condition on $\sigma_{0}$, we may make the condition stricter either by increasing the left-hand side or decreasing the right-hand side. We can increase the left-hand side by removing $r$ from the left-most integral's denominator ($r>1$) and then combine the left-most and right-most integrals:
	
	\begin{equation*}
	\int_{s \sigma_{j}^{*}}^{fr \sigma_{j}^{*}}\frac{d\sigma}{\sigma^{2-\frac{2}{N}}(1 - \frac{\sigma}{r\sigma_{j}^{*}})} \le (r-1) \int_{\sigma_{0}}^{s \sigma_{j}^{*}}\frac{d\sigma}{\sigma^{2-\frac{2}{N}}(1 - \frac{\sigma}{\sigma_{j}^{*}})}
	\end{equation*}
	
	Next, we note that the integration bounds give us a bound on $\sigma$ for either integral. This means we can replace $1-\frac{\sigma}{\sigma_{j}^{*}}$ with $1$ on the right-hand side, and replace $1-\frac{\sigma}{r\sigma_{j}^{*}}$ with $1-f$ on the left-hand side:
	
	\begin{equation*}
	\frac{1}{1-f}\int_{s \sigma_{j}^{*}}^{fr \sigma_{j}^{*}}\frac{d\sigma}{\sigma^{2-\frac{2}{N}}} \le (r-1) \int_{\sigma_{0}}^{s \sigma_{j}^{*}}\frac{d\sigma}{\sigma^{2-\frac{2}{N}}}
	\end{equation*}
	
	We may now solve these integrals for every $N$ and isolate $\sigma_{0}$, obtaining the lower bound on $\sigma_{0}^{th}$. We start with the case where $N=2$:
	
	\begin{equation*}
	\frac{1}{1-f} \big( \log(fr\sigma_{j}^{*}) - \log(s\sigma_{j}^{*}) \big) \le (r-1) \big( \log(s\sigma_{j}^{*}) - \log(\sigma_{0}) \big)
	\end{equation*}
	
	Rearranging to isolate $\sigma_{0}$, we obtain our result:
	
	\begin{equation}
	\sigma_{0} \le s\sigma_{j}^{*} \big( \frac{s}{rf} \big)^{\frac{1}{(1-f)(r-1)}}
	\end{equation}
	
	For the $N \ge 3$ case, we have the following after solving the integrals:
	
	\begin{equation*}
	\frac{1}{1-f} \Big( \big( \frac{1}{s\sigma_{j}^{*}} \big)^{1-\frac{2}{N}} - \big( \frac{1}{rf\sigma_{j}^{*}} \big)^{1-\frac{2}{N}} \Big) \le (r-1) \Big( \big( \frac{1}{\sigma_{0}} \big)^{1-\frac{2}{N}} - \big( \frac{1}{s\sigma_{j}^{*}} \big)^{1-\frac{2}{N}} \Big)
	\end{equation*}
	
	For simplicity we may further restrict the condition by removing the term $\big( \frac{1}{rf\sigma_{j}^{*}} \big)^{1-\frac{2}{N}}$. Solving for $\sigma_{0}$ gives us the following:
	
	\begin{equation}
	\sigma_{0} \le s\sigma_{j}^{*} \Big( \frac{(1-f)(r-1)}{1 + (1-f)(r-1)} \Big)^{\frac{N}{N-2}}
	\end{equation}
	
	\subsection*{Necessary Condition}
	
	To obtain a necessary (but not sufficient) condition on $\sigma_{0}$, we may relax the condition in \eqref{eq:condition} either by decreasing the left-hand side or increasing the right-hand side. We begin by rearranging the equation:
	
	\begin{equation*}
	\int_{s\sigma_{j}^{*}}^{fr \sigma_{j}^{*}}\frac{d\sigma}{\sigma^{2-\frac{2}{N}}(1 - \frac{\sigma}{r\sigma_{j}^{*}})} \le r \int_{\sigma_{0}}^{s \sigma_{j}^{*}}\frac{d\sigma}{\sigma^{2-\frac{2}{N}}(1 - \frac{\sigma}{\sigma_{j}^{*}})} - \int_{\sigma_{0}}^{s \sigma_{j}^{*}}\frac{d\sigma}{\sigma^{2-\frac{2}{N}}(1 - \frac{\sigma}{r\sigma_{j}^{*}})}
	\end{equation*}
	
	Like before, we may use the integration bounds to bound $\sigma$. Plugging in $\sigma=s\sigma_{j}^{*}$ for all integrals decreases the left-hand side and increases the right-hand side, leading us to the following:
	
	\begin{equation*}
	\frac{r}{r-s}\int_{s\sigma_{j}^{*}}^{fr \sigma_{j}^{*}}\frac{d\sigma}{\sigma^{2-\frac{2}{N}}} \le \big(\frac{r}{1-s} - \frac{r}{r-s} \big) \int_{\sigma_{0}}^{s \sigma_{j}^{*}}\frac{d\sigma}{\sigma^{2-\frac{2}{N}}} 
	\end{equation*}
	
	Rearranging, we get the following inequality:

	\begin{equation*}
	\int_{s\sigma_{j}^{*}}^{fr \sigma_{j}^{*}}\frac{d\sigma}{\sigma^{2-\frac{2}{N}}} \le \frac{r-1}{1-s} \int_{\sigma_{0}}^{s \sigma_{j}^{*}}\frac{d\sigma}{\sigma^{2-\frac{2}{N}}}
	\end{equation*}
	
	We now solve the integrals for the different cases. For $N=2$, we have:
	
	\begin{equation*}
	\log(fr\sigma_{j}^{*}) - \log(s\sigma_{j}^{*}) \le \frac{r-1}{1-s} \Big( \log(s\sigma_{j}^{*}) - \log(\sigma_{0}) \Big)
	\end{equation*}
	
	Rearranging to isolate $\sigma_{0}$, we get our condition:
	
	\begin{equation}
	\sigma_{0} \le s\sigma_{j}^{*} \big( \frac{s}{rf} \big)^{\frac{1-s}{r-1}}
	\end{equation}
	
	Finally, for $N \ge 3$, we solve the integrals to give us:
	
	\begin{equation*}
	\Big( \big( \frac{1}{s\sigma_{j}^{*}} \big)^{1-\frac{2}{N}} - \big( \frac{1}{rf\sigma_{j}^{*}} \big)^{1-\frac{2}{N}} \Big) \le \frac{r-1}{1-s} \Big( \big( \frac{1}{\sigma_{0}} \big)^{1-\frac{2}{N}} - \big( \frac{1}{s\sigma_{j}^{*}} \big)^{1-\frac{2}{N}} \Big)
	\end{equation*}
	
	Rearranging to isolate $\sigma_{0}$, we get our condition:
	
	\begin{equation}
	\sigma_{0} \le s\sigma_{j}^{*} \Big( \frac{r-1}{r-s} \Big)^{\frac{N}{N-2}}
	\end{equation}
	
	\subsection*{Summary}
	
	For a given $N$, we derived a sufficient condition and a necessary condition on $\sigma_{0}$ for ($s,f$)-incremental learning. The necessary and sufficient condition on $\sigma_{0}$, which is the largest initialization value for which we see incremental learning (denoted $\sigma_{0}^{th}$), is between the two derived bounds.
	
	The precise bounds can possibly be improved a bit, but the asymptotic dependence on $r$ is the crux of the matter, showing the dependence on $r$ changes with depth with a substantial difference when we move from shallow models ($N=2$) to deeper ones ($N\ge3$)

\end{proof}

\section{Proof of Theorem \ref{thm:gradient_descent_2}} \label{app:gradient_descent}

\begin{theorem*}
	Given two values $\sigma_{i}, \sigma_{j}$ of a depth-2 toy linear model as in \eqref{eq:model}, such that $\frac{\sigma_{i}^{*}}{\sigma_{j}^{*}} = r>1$ and the model is initialized as in \eqref{eq:init}, and given two scalars $s \in (0,\frac{1}{4})$ and $f \in (\frac{3}{4},1)$, and assuming $\sigma_{j}^{*} \ge 2\sigma_{0}$, and assuming we optimize with gradient descent with a learning rate $\eta \le \frac{c}{\sigma_{1}^{*}}$ for $c<2(\sqrt{2}-1)$ and $\sigma_{1}^{*}$ the largest value of $\sigma^{*}$, then the largest initialization value for which the learning phases of the values are $(s,f)$-incremental, denoted $\sigma_{0}^{th}$, is lower and upper bounded in the following way:

	\begin{equation*}
	\frac{1}{2} \frac{s}{1-s}\sigma_{j}^{*} \Big( \frac{1-f}{2rf} \frac{s}{1-s} \Big)^{\frac{1}{A-1}} \le \sigma_{0}^{th} \le \frac{s}{1-s}\sigma_{j}^{*} \Big( \frac{1-f}{f} \frac{s}{1-s} \Big)^{\frac{1}{B-1}}
	\end{equation*}
	
	Where $A$ and $B$ are defined as:
	
	\begin{equation*}
	A = \frac{\log\big( 1 - c\frac{\sigma_{i}^{*}}{\sigma_{1}^{*}} + c^{2}\big(\frac{\sigma_{i}^{*}}{\sigma_{1}^{*}}\big)^{2} \big)}{\log\big( 1 - c\frac{\sigma_{j}^{*}}{\sigma_{1}^{*}} - \frac{c^{2}}{4}\big(\frac{\sigma_{j}^{*}}{\sigma_{1}^{*}}\big)^{2} \big)} \qquad
	B = \frac{\log\big( 1 - c\frac{\sigma_{i}^{*}}{\sigma_{1}^{*}} - \frac{c^{2}}{4}\big(\frac{\sigma_{i}^{*}}{\sigma_{1}^{*}}\big)^{2} \big)}{\log\big( 1 - c\frac{\sigma_{j}^{*}}{\sigma_{1}^{*}} + c^{2}\big(\frac{\sigma_{j}^{*}}{\sigma_{1}^{*}}\big)^{2} \big)}
	\end{equation*}

\end{theorem*}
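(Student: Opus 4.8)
The plan is to run the gradient-flow argument of Theorem~\ref{thm:depth_separation} in discrete time. First, record the exact update: for $N=2$ the loss decouples across coordinates with $\nabla_{w_i}\ell=\tfrac12 w_i(w_i^2-\sigma_i^*)$, so a gradient step $w_i\mapsto w_i-\eta\nabla_{w_i}\ell$ gives $w_i^{(t+1)}=w_i^{(t)}\big(1+\tfrac{\eta}{2}(\sigma_i^*-\sigma_i^{(t)})\big)$ and hence, with $\sigma_i=w_i^2$,
$$
\sigma_i^{(t+1)}=\sigma_i^{(t)}\Big(1+\tfrac{\eta}{2}\big(\sigma_i^*-\sigma_i^{(t)}\big)\Big)^{2}.
$$
Under $\eta\le c/\sigma_1^*$ with $c<2(\sqrt2-1)$ (in particular $\eta\sigma^*<1$), the map $\sigma\mapsto\sigma\big(1+\tfrac\eta2(\sigma^*-\sigma)\big)^2$ sends $[0,\sigma^*]$ into itself monotonically, so each coordinate increases to its target without overshooting; consequently, as in the gradient-flow case, $(s,f)$-incremental learning is equivalent (up to the integer rounding implicit in the definition) to $T_f(\sigma_i)\le T_s(\sigma_j)$, where $T_\alpha$ denotes the first step index at which a coordinate reaches an $\alpha$-fraction of its target.

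The crux is to sandwich $T_\alpha$. Because the $N=2$ gradient-flow solution is logistic (Theorem~\ref{thm:analytical_solution}), the right variable is $\rho=\sigma/(\sigma^*-\sigma)$, which grows purely geometrically under gradient flow; its discrete version $\rho_i^{(t)}=\sigma_i^{(t)}/(\sigma_i^*-\sigma_i^{(t)})$ obeys $\rho_i^{(t+1)}=\rho_i^{(t)}\,R(\sigma_i^{(t)})$ with
$$
R(\sigma)=\frac{\big(1+\tfrac\eta2(\sigma^*-\sigma)\big)^2}{1-\eta\sigma\big(1+\tfrac\eta4(\sigma^*-\sigma)\big)}\,.
$$
The key fact is that $R(\sigma)$ stays within order-$c^2$ of $1+\eta\sigma^*$ along the whole trajectory: writing $x=\sigma^*/\sigma_1^*$ and using $\eta\sigma^*\le cx$, one bounds $R(\sigma)$ for $\sigma\in[\sigma_0,\alpha\sigma^*]$ between explicit constants of the shape $\tfrac{1}{1-cx+c^2x^2}$ and $\tfrac{1}{1-cx-\frac{c^2}{4}x^2}$, the upper one being finite precisely when $c<2(\sqrt2-1)$ — which is where that hypothesis is used. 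Telescoping $\log\rho_i^{(t)}$ and noting $\rho=\tfrac{\alpha}{1-\alpha}$ when $\sigma=\alpha\sigma^*$ then pins $T_\alpha$ between $\tfrac{1}{\log R^{\mathrm{up}}}\log\!\big(\tfrac{\alpha}{1-\alpha}\tfrac{\sigma^*-\sigma_0}{\sigma_0}\big)$ and $\tfrac{1}{\log R^{\mathrm{lo}}}\log\!\big(\tfrac{\alpha}{1-\alpha}\tfrac{\sigma^*-\sigma_0}{\sigma_0}\big)+1$. The quantities $A,B$ in the statement emerge exactly as the ratios $\log R_i^{\mathrm{lo}}/\log R_j^{\mathrm{up}}$ and $\log R_i^{\mathrm{up}}/\log R_j^{\mathrm{lo}}$ — the effective value of $r$ in discrete time — and both collapse to $r$ as $c\to0$.

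It remains to combine the two estimates. For the necessary (upper) bound, incremental learning forces $T_f(\sigma_i)\le T_s(\sigma_j)$; inserting the lower bound for $T_f(\sigma_i)$ and the upper bound for $T_s(\sigma_j)$, and using $\sigma_i^*=r\sigma_j^*$ together with $\sigma_j^*\ge 2\sigma_0$ to replace $\sigma^*-\sigma_0$ by $\sigma^*$ up to a bounded factor, yields after absorbing the additive $O(1)$ a condition of the form $\tfrac{f}{1-f}\cdot\tfrac{\sigma_i^*}{\sigma_0}\le\big(\tfrac{s}{1-s}\cdot\tfrac{\sigma_j^*}{\sigma_0}\big)^{B}$; solving for $\sigma_0$ gives the stated upper bound on $\sigma_0^{th}$, where $\tfrac{s}{1-s}$ and $\tfrac{1-f}{f}$ are just $\rho$ evaluated at the two thresholds. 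The sufficient (lower) bound is the mirror image with $A$ in place of $B$, the extra factors of $2$ there absorbing the $+1$ rounding term and the at-most-doubling overshoot the first time a coordinate crosses $\alpha\sigma^*$. The part I expect to be the real work is the discreteness bookkeeping: proving the uniform upper and lower bounds on $R(\sigma)$ (an elementary but fiddly inequality in $a=\eta(\sigma^*-\sigma)$ and $b=\eta\sigma$ subject to $a+b=\eta\sigma^*$, which also requires the relevant $\sigma_0$ to be small relative to $\sigma_j^*$), and checking that the rounding errors and overshoot factors only ever loosen each inequality in the one direction appropriate to its side, rather than piling up.
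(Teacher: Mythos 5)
Your proposal is correct and follows essentially the same route as the paper: the same recurrence $\sigma_i(t+1)=\sigma_i(t)\big(1+\tfrac{\eta}{2}(\sigma_i^*-\sigma_i(t))\big)^2$, the same no-overshoot lemma justified by $c<2(\sqrt2-1)$, and uniform per-step multiplicative bounds on a logistic coordinate (your $\rho=\sigma/(\sigma^*-\sigma)$ is, up to the constant $\sigma^*$, just the reciprocal of the paper's tracked quantity $\tfrac{1}{\sigma}-\tfrac{1}{\sigma^*}$), telescoped into two-sided bounds on $t_\alpha$ and combined into necessary and sufficient conditions yielding exactly the stated $A$ and $B$. The only minor discrepancy is bookkeeping: the paper obtains the factor $\tfrac12$ in the lower bound from the assumption $\sigma_j^*\ge 2\sigma_0$ (replacing $\sigma_j^*-\sigma_0$ by $\tfrac12\sigma_j^*$), not from rounding or overshoot, but this does not affect the argument.
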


\begin{proof}[Proof.]
	
	To show our result for gradient descent and $N=2$, we build on the proof techniques of theorem 3 of \citet{gidel2019implicit}. We start by deriving the recurrence relation for the values $\sigma(t)$ for general depth, when $t$ now stands for the iteration. Remembering that $w_{i}^{n}=\sigma_{i}$, we write down the gradient update for $w_{i}(t)$:
	
	$$
	w_{i}(t+1) = w_{i}(t) + \eta \frac{1}{N}w_{i}(t)^{N-1}(\sigma_{i}^{*} - \sigma_{i}(t)) = \sqrt[N]{\sigma_{i}(t)} + \eta \frac{1}{N}\sigma_{i}(t)^{1-\frac{1}{N}}(\sigma_{i}^{*} - \sigma_{i}(t)) 
	$$
	
	Raising $w_{i}(t)$ to the $N$th power, we get the gradient update for the $\sigma$ values:
	
	\begin{equation} \label{eq:recurrence}
	\sigma_{i}(t+1) = \Big( \sqrt[N]{\sigma_{i}(t)} + \eta \frac{1}{N}(\sigma_{i}^{*} - \sigma_{i}(t)) \sigma_{i}(t)^{1-\frac{1}{N}} \Big)^{N} = \sigma_{i}(t) \Big( 1 + \eta \frac{1}{N}\sigma_{i}(t)^{1-\frac{2}{N}}(\sigma_{i}^{*} - \sigma_{i}(t)) \Big)^{N}
	\end{equation}
	
	Next, we will prove a simple lemma which gives us the maximal learning rate we will consider for the analysis, for which there is no overshooting (the values don't grow larger than the optimal values).
	
	\begin{lemma} \label{lemma:optimal_rate}
		For the gradient update in \eqref{eq:recurrence}, assuming $\sigma_{i}(0) < \sigma_{i}^{*}$, if $\eta \le \Big(\frac{1}{\sigma_{1}^{*}}\Big)^{2-\frac{2}{N}}$ and $N \ge 2$, then:
		
		$$
		\forall t: \; \sigma_{i}(t) \le \sigma_{i}^{*}
		$$
	\end{lemma}
	
	\begin{proof}
		Plugging in $\eta=c\Big(\frac{1}{\sigma_{i}^{*}}\Big)^{2-\frac{2}{N}}$ for $c\le1$, we have:
		
		$$
		\sigma_{i}(t+1) = \sigma_{i}(t) \Big( 1 + \frac{c}{N} \Big( \frac{\sigma_{i}(t)}{\sigma_{i}^{*}} \Big)^{1-\frac{2}{N}}\Big(1 - \big( \frac{\sigma_{i}(t)}{\sigma_{i}^{*}} \big)\Big) \Big)^{N} \le \sigma_{i}(t) e^{\Big( \frac{\sigma_{i}(t)}{\sigma_{i}^{*}} \Big)^{1-\frac{2}{N}}\Big(1 - \big( \frac{\sigma_{i}(t)}{\sigma_{i}^{*}} \big)\Big)}
		$$
		
		Defining $r_{i} = \frac{\sigma_{i}}{\sigma_{i}^{*}}$ and dividing both sides by $\sigma_{i}^{*}$, we have:
		
		$$
		r_{i}(t+1) \le r_{i}(t) e^{r_{i}(t)^{1-\frac{2}{N}} (1 - r_{i}(t))}
		$$
		
		It is enough to show that for any $0 \le r\le1$, we have that $r e^{r^{1-\frac{2}{N}} (1 - r)} \le 1$, as over-shooting occurs when $r_{i}(t)>1$. Indeed, this function is monotonic increasing in $0 \le r \le 1$ (since the exponent is non negative), and equals $1$ when $r=1$. Since $r=1$ is a fixed point and no iteration that starts at $r<1$ can cross $1$, then $r_{i}(t) \le 1$ for any $t$. This concludes our proof.
	\end{proof}
	
	Under this choice of learning rate, we can now obtain our incremental learning results for gradient descent when $N=2$. Our strategy will be bounding $\sigma_{i}(t)$ from below and above, which will give us a lower and upper bound for $t_{\alpha}(\sigma_{i})$. Once we have these bounds, we will be able to describe either a necessary or a sufficient condition on $\sigma_{0}$ for incremental learning, similar to theorem \ref{thm:depth_separation}.
	
	The update rule for $N=2$ is:
	
	$$
	\sigma_{i}(t+1) = \sigma_{i}(t) \Big( 1 + \frac{1}{2}\eta (\sigma_{i}^{*} - \sigma_{i}(t)) \Big)^{2}
	$$
	
	Next, we plug in $\eta = \frac{c}{\sigma_{1}^{*}}$ for $c < 2(\sqrt{2}-1) < 1$ and denote $R_{i} = \frac{\sigma_{i}^{*}}{\sigma_{1}^{*}} \le 1$ and $r_{i}(t) = \frac{\sigma_{i}(t)}{\sigma_{i}^{*}} \le 1$ to get:
	
	\begin{equation}
	\sigma_{i}(t+1) = \sigma_{i}(t)\Big( 1 + \frac{c}{2} R_{i} (1 - r_{i}(t)) \Big)^{2}
	\end{equation}
	
	Following theorem 3 of \citet{gidel2019implicit}, we bound $\frac{1}{\sigma_{i}(t)}$:
	
	\begin{align*}
	\frac{1}{\sigma_{i}(t+1)} = & \frac{1}{\sigma_{i}(t)} \frac{1}{\big(1 + \frac{c}{2} R_{i} (1 - r_{i}(t))\big)^{2}} \\
	= & \frac{1}{\sigma_{i}(t)} \frac{1}{1 + (1 - r_{i}(t)) \big(cR_{i} + \frac{c^{2}}{4}R_{i}^{2} (1 - r_{i}(t))\big) } \\
	\ge & \frac{1}{\sigma_{i}(t)} \frac{1}{1 + (1 - r_{i}(t)) \big(cR_{i} + \frac{c^{2}}{4}R_{i}^{2})\big) } \\
	\ge & \frac{1}{\sigma_{i}(t)}\big( 1 - (1-r_{i}(t))(cR_{i} + \frac{c^{2}}{4}R_{i}^{2}) \big) \\
	= & \frac{1}{\sigma_{i}(t)} - (\frac{1}{\sigma_{i}(t)} - \frac{1}{\sigma_{i}^{*}}) (cR_{i} + \frac{c^{2}}{4}R_{i}^{2})
	\end{align*}
	
	Where in the fourth line we use the inequality $\frac{1}{1+x} \ge 1-x, \; \forall x \ge 0$. We may now subtract $\frac{1}{\sigma_{i}^{*}}$ from both sides to obtain:
	
	$$
	\frac{1}{\sigma_{i}(t)} - \frac{1}{\sigma_{i}^{*}} \ge \big(\frac{1}{\sigma_{i}(t-1)} - \frac{1}{\sigma_{i}^{*}}\big) \big( 1 - cR_{i} - \frac{c^{2}}{4}R_{i}^{2} \big) \ge \big(\frac{1}{\sigma_{0}} - \frac{1}{\sigma_{i}^{*}}\big) \big( 1 - cR_{i} - \frac{c^{2}}{4}R_{i}^{2} \big)^{t}
	$$
	
	We may now obtain a bound on $t_{\alpha}(\sigma_{i})$ by plugging in $\sigma_{i}(t)=\alpha \sigma_{i}^{*}$ and taking the log:
	
	$$
	\log\big( \frac{1 - \alpha}{\alpha(\frac{\sigma_{i}^{*}}{\sigma_{0}}-1)} \big) \ge t_{\alpha} \cdot \log\big( 1 - cR_{i} - \frac{c^{2}}{4}R_{i}^{2} \big)
	$$
	
	Rearranging (note that $\log\big( 1 - cR_{i} - \frac{c^{2}}{4}R_{i}^{2} \big) < 0$ and that our choice of $c$ keeps the argument of the log positive), we get:
	
	\begin{equation}
	t_{\alpha}(\sigma_{i}) \ge \frac{\log\big( \frac{1 - \alpha}{\alpha(\frac{\sigma_{i}^{*}}{\sigma_{0}}-1)} \big)}{\log\big( 1 - cR_{i} - \frac{c^{2}}{4}R_{i}^{2} \big)}
	\end{equation}
	
	Next, we follow the same procedure for an upper bound. Starting with our update step:
	
	\begin{align*}
	\frac{1}{\sigma_{i}(t+1)} = & \frac{1}{\sigma_{i}(t)} \frac{1}{1 + (1 - r_{i}(t)) \big(cR_{i} + \frac{c^{2}}{4}R_{i}^{2} (1 - r_{i}(t))\big) } \\
	\le & \frac{1}{\sigma_{i}(t)} \frac{1}{1 + (1 - r_{i}(t))cR_{i}} \\
	\le & \frac{1}{\sigma_{i}(t)} \big( 1 - (1 - r_{i}(t))cR_{i} + (1 - r_{i}(t))^{2}c^{2}R_{i}^{2} \big)
	\end{align*}
	
	Where in the last line we use the inequality $\frac{1}{1+x} \le 1-x+x^{2}, \; \forall x \ge 0$. Subtracting $\frac{1}{\sigma_{i}^{*}}$ from both sides, we get:
	
	$$
	\frac{1}{\sigma_{i}(t)} - \frac{1}{\sigma_{i}^{*}} \le \big(\frac{1}{\sigma_{i}(t-1)} - \frac{1}{\sigma_{i}^{*}}\big) \big( 1 - cR_{i} + c^{2}R_{i}^{2} \big) \le \big(\frac{1}{\sigma_{0}} - \frac{1}{\sigma_{i}^{*}}\big) \big( 1 - cR_{i} + c^{2}R_{i}^{2} \big)^{t}
	$$
	
	Rearranging like before, we get the bound on the $\alpha$-time:
	
	\begin{equation}
	t_{\alpha}(\sigma_{i}) \le \frac{\log\big( \frac{1 - \alpha}{\alpha(\frac{\sigma_{i}^{*}}{\sigma_{0}}-1)} \big)}{\log\big( 1 - cR_{i} + c^{2}R_{i}^{2} \big)}
	\end{equation}
	
	Given these bounds, we would like to find the conditions on $\sigma_{0}$ that allows for $(s,f)$-incremental learning. We will find a sufficient condition and a necessary condition, like in the proof of theorem \ref{thm:depth_separation}.
	
	\subsection*{Sufficient Condition}
	
	A sufficient condition for incremental learning will be one which is possibly stricter than the exact condition. We can find such a condition by requiring the upper bound of $t_{f}(\sigma_{i})$ to be smaller than the lower bound on $t_{s}(\sigma_{j})$. This becomes the following condition:
	
	\begin{equation}
	\frac{\log\big( \frac{1 - f}{f} \frac{\sigma_{0}}{\sigma_{i}^{*}-\sigma_{0}} \big)}{\log\big( 1 - cR_{i} + c^{2}R_{i}^{2} \big)} \le \frac{\log\big( \frac{1 - s}{s} \frac{\sigma_{0}}{\sigma_{j}^{*}-\sigma_{0}} \big)}{\log\big( 1 - cR_{j} - \frac{c^{2}}{4}R_{j}^{2} \big)}
	\end{equation}
	
	Defining $A = \frac{\log\big( 1 - cR_{i} + c^{2}R_{i}^{2} \big)}{\log\big( 1 - cR_{j} - \frac{c^{2}}{4}R_{j}^{2} \big)}$ and rearranging, we get the following:
	
	\begin{equation}
	\log\big( \frac{1 - f}{f} \frac{\sigma_{0}}{\sigma_{i}^{*}-\sigma_{0}} \big) \ge A \log\big( \frac{1 - s}{s} \frac{\sigma_{0}}{\sigma_{j}^{*}-\sigma_{0}} \big)
	\end{equation}
	
	We may now take the exponent of both sides and rearrange again, remembering $\frac{\sigma_{i}^{*}}{\sigma_{j}^{*}}=r>1$, to get the following condition:
	
	\begin{equation}
	\frac{(\sigma_{j}^{*}-\sigma_{0})^{A}}{\sigma_{0}^{A-1}(r\sigma_{j}^{*}-\sigma_{0})} \ge \frac{f}{1-f} \big(\frac{1-s}{s} \big)^{A}
	\end{equation}
	
	Now, we will add the very reasonable assumption that $\sigma_{j}^{*} \ge 2\sigma_{0}$, which allows us to replace $\frac{\sigma_{j}^{*}-\sigma_{0}}{r\sigma_{j}^{*}-\sigma_{0}}$ with $\frac{1}{2r}$ and replace $(\sigma_{j}^{*}-\sigma_{0})^{A-1}$ with $(\frac{1}{2}\sigma_{j})^{A-1}$, only making the condition stricter. This simplifies the expression to the following:
	
	\begin{equation}
	\big(\frac{\sigma_{j}^{*}}{\sigma_{0}}\big)^{A-1} \ge \frac{rf}{1-f} \big(\frac{2-2s}{s} \big)^{A}
	\end{equation}
	
	Now we can rearrange and isolate $\sigma_{0}$ to get a sufficient condition for incremental learning:
	
	\begin{equation}
	\sigma_{0} \le \frac{1}{2} \frac{s}{1-s}\sigma_{j}^{*} \Big( \frac{1-f}{2rf} \frac{s}{1-s} \Big)^{\frac{1}{A-1}}
	\end{equation}
	
	\subsection*{Necessary Condition}
	
	A necessary condition for incremental learning will be one which is possibly more relaxed than the exact condition. We can find such a condition by requiring the lower bound of $t_{f}(\sigma_{i})$ to be smaller than the upper bound on $t_{s}(\sigma_{j})$. This becomes the following condition:
	
	\begin{equation}
	\frac{\log\big( \frac{1 - f}{f} \frac{\sigma_{0}}{\sigma_{i}^{*}-\sigma_{0}} \big)}{\log\big( 1 - cR_{i} - \frac{c^{2}}{4}R_{i}^{2} \big)} \le \frac{\log\big( \frac{1 - s}{s} \frac{\sigma_{0}}{\sigma_{j}^{*}-\sigma_{0}} \big)}{\log\big( 1 - cR_{j} + c^{2}R_{j}^{2} \big)}
	\end{equation}
	
	Defining $B = \frac{\log\big( 1 - cR_{i} - \frac{c^{2}}{4}R_{i}^{2} \big)}{\log\big( 1 - cR_{j} + c^{2}R_{j}^{2} \big)}$ and rearranging, we get the following:
	
	\begin{equation}
	\log\big( \frac{1 - f}{f} \frac{\sigma_{0}}{\sigma_{i}^{*}-\sigma_{0}} \big) \ge B \log\big( \frac{1 - s}{s} \frac{\sigma_{0}}{\sigma_{j}^{*}-\sigma_{0}} \big)
	\end{equation}
	
	We may now take the exponent of both sides and rearrange again, remembering $\frac{\sigma_{i}^{*}}{\sigma_{j}^{*}}=r>1$, to get the following condition:
	
	\begin{equation}
	\frac{(\sigma_{j}^{*}-\sigma_{0})^{B}}{\sigma_{0}^{B-1}(r\sigma_{j}^{*}-\sigma_{0})} \ge \frac{f}{1-f} \big(\frac{1-s}{s} \big)^{B}
	\end{equation}
	
	We may now relax the condition further, by removing the $r$ from the denominator of the left-hand side and the $\sigma_{0}$ from the numerator. This gives us the following:
	
	\begin{equation}
	\big(\frac{\sigma_{j}^{*}}{\sigma_{0}}\big)^{B-1} \ge \frac{f}{1-f} \big(\frac{1-s}{s} \big)^{B}
	\end{equation}
	
	Finally, rearranging gives us the necessary condition:
	
	\begin{equation}
	\sigma_{0} \le \frac{s}{1-s}\sigma_{j}^{*} \Big( \frac{1-f}{f} \frac{s}{1-s} \Big)^{\frac{1}{B-1}}
	\end{equation}
	
\end{proof}

\section{Discussion of Gradient Descent for General $N$} \label{app:gradient_descent_depth}

While we were able to generalize our result to gradient descent for $N=2$, our proof technique relies on the ability to get a non-implicit solution for $\sigma(t)$ which we discretized and bounded. This is harder to generalize to larger values of $N$, where the solution is implicit. Still, we can informally illustrate the effect of depth on the dynamics of gradient descent by approximating the update rule of the values.

We start by reminding ourselves of the gradient descent update rule for $\sigma$, for a learning rate $\eta = c\big(\frac{1}{\sigma_{1}^{*}}\big)^{2-\frac{2}{N}}$:

\begin{equation*}
\sigma_{i}(t+1) = \sigma_{i}(t) \Big( 1 +  \frac{c}{N} \big(\frac{1}{\sigma_{1}^{*}}\big)^{2-\frac{2}{N}} \sigma_{i}(t)^{1-\frac{2}{N}}(\sigma_{i}^{*} - \sigma_{i}(t)) \Big)^{N}
\end{equation*}

To compare two values in the same scales, we will divide both sides by the optimal value $\sigma_{i}^{*}$ and look at the update step of the ratio $r_{i}=\frac{\sigma_{i}(t)}{\sigma_{i}^{*}}$, also denoting $R_{i}=\frac{\sigma_{i}^{*}}{\sigma_{1}^{*}}$:

\begin{equation}
r_{i}(t+1) = r_{i}(t) \Big( 1 + \frac{c}{N} R_{i}^{2-\frac{2}{N}} r_{i}(t)^{1-\frac{2}{N}} (1 - r_{i}(t)) \Big)^{N}
\end{equation}

We will focus on the early stages of the optimization process, where $r \ll 1$. This means we can neglect the $1-r_{i}(t)$ term in the update step, giving us the approximate update step we will use to compare the general $i,j$ values:

\begin{equation}
r_{i}(t+1) \approx r_{i}(t) \Big( 1 + \frac{c}{N} R_{i}^{2-\frac{2}{N}} r_{i}(t)^{1-\frac{2}{N}} \Big)^{N}
\end{equation}

We would like to compare the dynamics of $r_{i}$ and $r_{j}$, which is difficult to do when the recurrence relation isn't solvable. However, we can observe the first iteration of gradient descent and see how depth affects this iteration. Since we are dealing with variables which are ratios of different optimal values, the initial values of $r$ are different. Denoting $\textbf{r} = \frac{\sigma_{i}^{*}}{\sigma_{j}^{*}}$, we can describe the initialization of $r_{j}$ using that of $r_{i}$:

$$
r_{j}(0) = \textbf{r}r_{i}(0)
$$

Plugging in the initial conditions and noting that $R_{i} = \textbf{r}R_{j}$, we get:

\begin{align}
r_{i}(1) \approx r_{i}(0) \Big( 1 + \frac{cR_{i}^{2-\frac{2}{N}}}{N}  r_{i}(0)^{1-\frac{2}{N}} \Big)^{N} \\
r_{j}(1) \approx r_{i}(0) \Big( \sqrt[N]{\textbf{r}} + \big( \frac{1}{\textbf{r}} \big)^{\frac{N-1}{N}} \frac{cR_{i}^{2-\frac{2}{N}}}{N}r_{i}(0)^{1-\frac{2}{N}} \Big)^{N}
\end{align}

We see that the two ratios have a similar update, with the ratio of optimal values playing a role in how large the initial value is versus how large the added value is. When we use a small learning rate, we have a very small $c$ which means we can make a final approximation and neglect the higher order terms of $c$:

\begin{align}
r_{i}(1) \approx & r_{i}(0) + cR_{i}^{2-\frac{2}{N}}r_{i}(0)^{2-\frac{2}{N}} \\
r_{j}(1) \approx & \textbf{r}r_{i}(0) + \big( \frac{1}{\textbf{r}} \big)^{\frac{N-1}{N}}cR_{i}^{2-\frac{2}{N}}r_{i}(0)^{2-\frac{2}{N}} 
\end{align}

We can see that while the initial conditions favor $r_{j}$, the size of the update for $r_{i}$ is larger by a factor of $r^{\frac{N-1}{N}}$ when the initialization and learning rates are small. This accumulates throughout the optimization, making $r_{i}$ eventually converge faster than $r_{j}$.

The effect of depth here is clear - the deeper the model, the larger the relative step size of $r_{i}$ and the faster it converges relative to $r_{j}$.

\section{Comparison of The Toy Model and OMP} \label{app:omp}

Learning our toy model, when it's incremental learning is taken to the limit, can be described as an iterative procedure where at every step an additional feature is introduced such that it's weight is non-zero and then the model is optimized over the current set of features. This description is also relevant for the sparse approximation algorithm orthogonal matching pursuit \citep{pati1993orthogonal}, where the next feature is greedily chosen to be the one which most improves the current model.

While the toy model and OMP are very different algorithms for learning sparse linear models, we will show empirically that they behave similarly. This allows us to view incremental learning as a continuous-time extension of a greedy iterative algorithm.

To allow for negative weights in our experiments, we augment our toy model as in the toy model of \citet{woodworth2019kernel}. Our model will have the same induced form as before:

$$
f_{\sigma}(x) = \langle \sigma, x \rangle
$$

However, we parameterize $\sigma$ using $w_{+}, w_{-} \in \mathbb{R}^{d}$ in the following way:

\begin{align}
\sigma_{i} & = w_{+,i}^{N} - w_{-,i}^{N} \\
\forall i, \; w_{+,i}(0) & = w_{-,i}(0) = \sqrt[N]{\sigma_{0}}
\end{align}

We can now treat this algorithm as a sparse approximation pursuit algorithm - given a dictionary $D \in \mathbb{R}^{d \times n}$ and an example $x\in\mathbb{R}^{d}$, we wish to find the sparsest $\alpha$ for which $D\alpha \approx x$ by minimizing the $\ell_{0}$ norm of $\alpha$ subject to $||D\alpha - x||_{2}^{2}=0$\footnote{Note that this problem is equivalent to learning the toy model over the squared loss, where the examples in the original learning problem play the role of the dictionary.}. Under this setting, we can compare OMP to our toy model by comparing the sets of features that the two algorithms choose for a given example and dictionary.

In figure \ref{fig:omp} we run such a comparison. Using a dictionary of $1000$ atoms and an example of dimensionality $80$ sampled from a random hidden vector of a given sparsity $s$, we run both algorithms and record the first $s$ features chosen\footnote{For the toy model, we define a feature to be ``chosen'' once it's $\sigma_{i}$ value passes some small threshold in absolute value.}.

For every sparsity $s$, we plot the mean fraction of agreement between the sets of features chosen by OMP and the toy model over 100 experiments. We see that the two algorithms choose very similar features at the beginning, suggesting that the deep model approximates the discrete behavior of OMP. Only when the number of features increases do we see that the behavior of the two models begins to differ, caused by the fact that the toy model has a finite initialization scale and learning rate.

These experiments demonstrate the similarity between the incremental learning of deep models and the discrete behavior of greedy approximation algorithms such as OMP. Adopting this view also allows us to put our finger on another strength of the dynamics of deep models - while greedy algorithms such as OMP require the analytical solution or approximation of every iterate, the dynamics of deep models are able to incrementally learn any differentiable function. For example, looking back at the matrix sensing task and the classification models in section \ref{sec:models}, we see that while there isn't an immediate and efficient extension of OMP for these settings, the dynamics of learning deep models extends naturally and exhibits the same incremental learning as OMP.

\begin{figure}
	\begin{center}
		\includegraphics[width=8cm]{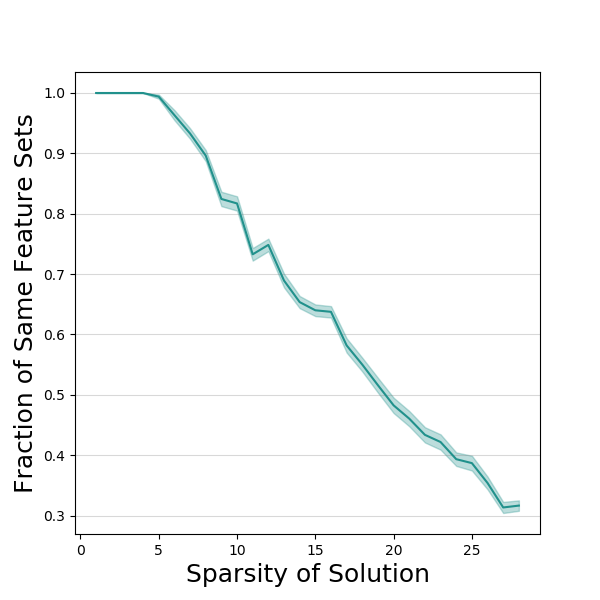}
	\end{center}
	\caption{
		Empirical comparison of the dynamics of the toy model to OMP. The toy model has a depth of $5$ and was initialized with a scale of 1e-4 and a learning rate of 3e-3.
		We compare the fraction of agreement between the sets of first $s$ features selected of the two algorithms for every given sparsity level $s$, averaged over 100 experiments (the shaded regions are empirical standard deviations). For example, for sparsity level $3$, we look at the sets of first $3$ features selected by each algorithm and calculate the fraction of them that appear in both sets.
	}
	\label{fig:omp}
\end{figure}

\section{Incremental Learning in Matrix Sensing} \label{app:matrix_sensing}

\subsection{Proof of Theorem \ref{thm:matrix_sensing}}

\begin{theorem*}
	Minimizing the deep matrix sensing model described in \eqref{eq:sensing_model} with gradient flow over the depth normalized squared loss \eqref{eq:sense_loss}, with Gaussian inputs and weights initialized as in \eqref{eq:sensing_model} leads to the following dynamical equations for different values of $N$:
	
	\begin{equation*}
	\dot{\sigma}_{i}(t) = \sigma_{i}(t)^{2-\frac{2}{N}}(\sigma_{i}^{*} - \sigma_{i}(t))
	\end{equation*}
	
	Where $\sigma_{i}$ and $\sigma_{i}^{*}$ are the $i$th singular values of $W$ and $W^{*}$, respectively, corresponding to the same singular vectors.
\end{theorem*}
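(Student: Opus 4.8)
The plan is to follow the strategy of \citet{saxe2013exact} and \citet{gidel2019implicit}: reduce the matrix dynamics to the scalar dynamics already solved for the toy model, by showing that the gradient-flow trajectory stays confined to a set of simultaneously diagonalizable matrices. Since $W^{*}$ is PSD, write its eigendecomposition $W^{*} = U\Sigma^{*}U^{T}$ with $U$ orthogonal and $\Sigma^{*}$ diagonal with nonnegative entries $\sigma_{i}^{*}$ (which are also the singular values of $W^{*}$). I would show each $W_{i}(t)$ can be written $W_{i}(t) = U D_{i}(t) U^{T}$ with $D_{i}(t)$ diagonal, so the singular values of $W = W_{N}\cdots W_{1}$ are the diagonal entries of $D_{N}(t)\cdots D_{1}(t)$ with singular vectors the columns of $U$, and then derive the claimed ODE for those entries.

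First I would compute the gradient of the loss \eqref{eq:sense_loss} with respect to the $j$-th factor,
\[
\nabla_{W_{j}}\ell_{N} = \tfrac{1}{N}\,(W_{N}\cdots W_{j+1})^{T}\,(W - W^{*})\,(W_{j-1}\cdots W_{1})^{T},
\]
so that gradient flow reads $\dot{W}_{j} = -\nabla_{W_{j}}\ell_{N}$. The key lemma is an invariance claim: the set $\mathcal{M} = \{(W_{1},\dots,W_{N}) : W_{i}=U D_{i}U^{T},\ D_{i}\text{ diagonal}\}$ is invariant under the flow. Indeed, for $(W_{1},\dots,W_{N})\in\mathcal{M}$ one has $W - W^{*} = U(D_{N}\cdots D_{1} - \Sigma^{*})U^{T}$, and since products of $U$-conjugated diagonal matrices are again $U$-conjugated diagonal (the $D_{i}$ commute and are symmetric), $\nabla_{W_{j}}\ell_{N}\in U\cdot(\text{diagonal})\cdot U^{T}$ for every $j$; hence $\dot{W}_{j}$ keeps each factor in the prescribed form. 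The initialization $W_{i}(0)=\sqrt[N]{\sigma_{0}}\,I$ lies in $\mathcal{M}$, so by uniqueness of solutions the whole trajectory does.

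Restricting to $\mathcal{M}$ and writing $d_{j}^{(k)}(t) = (D_{j}(t))_{kk}$, the flow decouples over the index $k$ into
\[
\dot{d}_{j}^{(k)} = \tfrac{1}{N}\Big(\textstyle\prod_{i\neq j} d_{i}^{(k)}\Big)\Big(\sigma_{k}^{*} - \textstyle\prod_{i} d_{i}^{(k)}\Big).
\]
Because all factors share the initial value $\sqrt[N]{\sigma_{0}}$ and the right-hand side is symmetric under permuting the factors, uniqueness forces $d_{j}^{(k)}(t)\equiv d^{(k)}(t)$ for all $j$ — precisely the weight-sharing reduction \eqref{eq:deep_parameterization} of the product to one variable. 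Setting $\sigma_{k}(t) = \big(d^{(k)}(t)\big)^{N}$, which is the $k$-th singular value of $W(t)$, the chain rule gives $\dot{\sigma}_{k} = N(d^{(k)})^{N-1}\dot{d}^{(k)} = (d^{(k)})^{2N-2}(\sigma_{k}^{*}-\sigma_{k}) = \sigma_{k}^{2-2/N}(\sigma_{k}^{*}-\sigma_{k})$, which is exactly \eqref{eq:dynamical_equation}.

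I expect the only genuine subtlety to be the invariance lemma together with the bookkeeping needed to legitimately identify the diagonal entries of $W(t)$ with singular values ``corresponding to the same singular vectors'': one must use that PSD-ness of $W^{*}$ makes its eigenbasis the relevant one, that $\sigma_{k}(t)\ge 0$ along the flow so $d^{(k)}=\sigma_{k}^{1/N}$ is well defined (as in the toy model under $\sigma_{0}>0$), and that spectral degeneracies cause no problem since the fixed columns of $U$ may always be taken as the singular vectors. Everything downstream of the lemma is the same scalar computation already performed in the proof of Theorem \ref{thm:analytical_solution}.
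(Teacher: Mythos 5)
Your proposal is correct and follows essentially the same route as the paper's proof: diagonalize in the eigenbasis of the PSD matrix $W^{*}$, show the gradient flow preserves the $U$-conjugated-diagonal structure of every factor, use the balanced initialization plus uniqueness to equate all factors, and apply the chain rule to recover the scalar ODE of the toy model. The invariance lemma you flag as the main subtlety is exactly the step the paper establishes via the change of variables $D_{n}=UW_{n}U^{T}$.
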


\begin{proof}
	
	We will adapt the proof from \citet{saxe2013exact} for multilayer linear networks. The gradient flow equations for $W_{n},\; n\in[N]$ are:
	
	$$ \dot{W_{n}} = \frac{1}{N} W_{1:n-1}^{T}(W^{*}-W)W_{n+1:N}^{T} $$
	
	Where we denote $W_{j:k}=\prod_{i=j}^{k}W_{i}$. 
	
	Since we assumed $W^{*}$ is (symmetric) PSD, there exists an orthogonal matrix $U$ for which $D^{*}=UW^{*}U^{T}$ where $D^{*}$ is diagonal. Under the initialization in \eqref{eq:init}, $U$ diagonalizes all $W_{n}$ matrices at initialization such that $D_{n}=UW_{n}U^{T}=\sqrt[N]{\sigma_{0}}I$. Making this change of variables for all $W_{n}$, we get:
	
	$$ \dot{W_{n}} = \frac{1}{N} U^{T} D_{1:n-1} U (W^{*} - W) U^{T} D_{n+1:N} U $$
	
	Rearranging, we get a set of decoupled differential equations for the singular values of $W_{n}$:
	
	$$ \dot{D_{n}} = \frac{1}{N} D_{1:n-1} (D^{*} - D) D_{n+1:N} $$
	
	Note that since these matrices are all diagonal at initialization, the above dynamics ensure that they remain diagonal throughout the optimization. Denoting $\sigma_{n,i}$ as the $i$'th singular value of $W_{n}$ and $\sigma_{i}$ as the $i$'th singular value of $W$, we get the following differential equation:
	
	$$ \dot{\sigma}_{n,i} = \frac{1}{N} (\sigma_{i}^{*} - \sigma_{i}) \prod_{j \ne n}\sigma_{j,i} $$
	
	Since we assume at initialization that $\forall n,m,i:\; \sigma_{n,i}(0)=\sigma_{m,i}(0)=\sqrt[N]{\sigma_{0}}$, the above dynamics are the same for all singular values and we get $\forall n,m,i:\; \sigma_{n,i}(t)=\sigma_{m,i}(t)=\sqrt[N]{\sigma_{i}(t)}$. We may now use this to calculate the dynamics of the singular value of $W$, since they are the product the the singular values of all $W_{n}$ matrices. Denoting $\sigma_{-n,i}=\prod_{k \ne n}\sigma_{k,i}$ and using the chain rule:
	
	$$ \dot{\sigma}_{i} = \sum_{n=1}^{N} \sigma_{-n,i}\dot{\sigma}_{n,i} = \sigma_{i}^{2-\frac{2}{N}}(\sigma_{i}^{*}-\sigma_{i}) $$
	
\end{proof}

\subsection{Empirical Examination}

Our analytical results are only applicable for the population loss over Gaussian inputs. These conditions are far from the ones used in practice and studied in \citet{arora2019implicit}, where the problem is over-determined and the weights are drawn from a Gaussian distribution with a small variance. To show our conclusions regarding incremental learning extend qualitatively to more natural settings, we empirically examine the deep matrix sensing model in this natural setting for different depths and initialization scales as seen in figure \ref{fig:matrix_sensing}. 

Notice how incremental learning is exhibited even when the number of examples is much smaller than the number of parameters in the model. While we can't rely on our theory for describing the exact dynamics of the optimization for these kinds of over-determined problems, the qualitative conclusions we get from it are still applicable.

Another interesting phenomena we should note is that once the dataset becomes very small (the second row of the figure), we see all ``currently active'' singular values change at the beginning of every new phase (this is best seen in the bottom-right panel). This suggests that since there is more than one optimal solution, once we increase the current rank of our model it may find a solution that has a different set of singular values and vectors and thus all singular values change at the beginning of a new learning phase. This demonstrates the importance of incremental learning for obtaining sparse solutions - once the initialization conditions and depth are such that the learning phases are distinct, gradient descent finds the optimal rank-$i$ solution in every phase $i$. For these dynamics to successfully recover the optimal solution at every phase, the phases need to be far enough apart from each other to allow for the singular values and vectors to change before the next phase begins.

\begin{figure}
	\begin{center}
		\includegraphics[width=\textwidth]{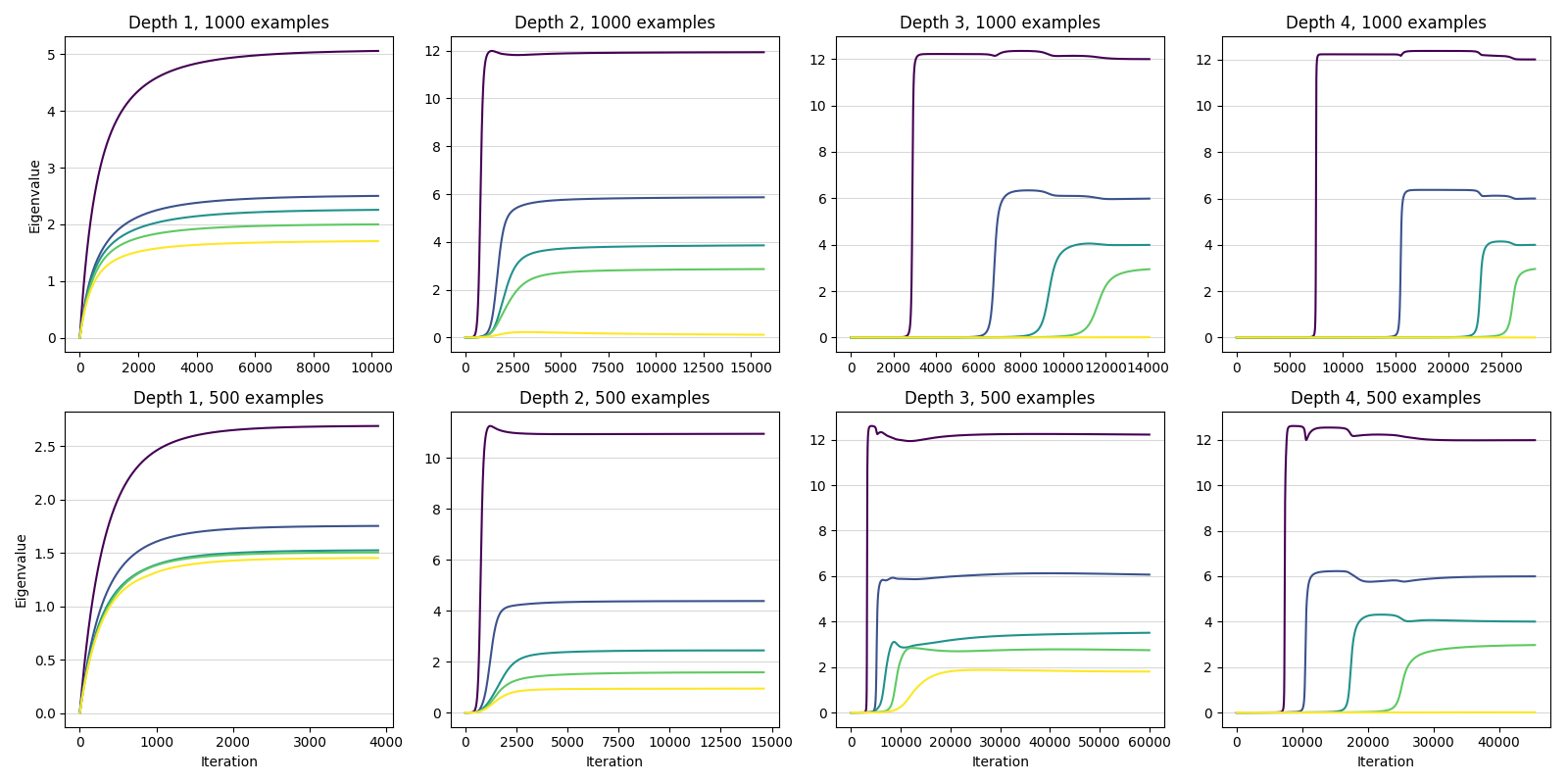}
	\end{center}
	\caption{
		Evolution of the top-$5$ singular values of the deep matrix sensing model, with Gaussian initialization with variance such that the initial singular values are in expectation 1e-4. The model's size and data are in $\mathbb{R}^{50 \times 50}$. 
		The columns correspond to different parameterization depths, while the rows correspond to different dataset sizes. In both cases the problem is over-determined, since the number of examples is smaller than the number of parameters. Since the original matrix is rank-$4$, we can recognize an unsuccessful recovery when all five singular values are nonzero, as seen clearly for both depth-1 plots.
	}
	\label{fig:matrix_sensing}
\end{figure}

\section{Incremental Learning in Quadratic Networks} \label{app:quadratic_model}

\subsection{Proof of Theorem \ref{thm:quadratic_model}}

\begin{theorem*}
	Minimizing the quadratic network described and initialized as in \eqref{eq:quad_model} with gradient flow over the variance loss defined in \eqref{def:variance_loss} with Gaussian inputs leads to the following dynamical equations:
	
	\begin{equation*}
	\dot{\sigma}_{i}(t) = \sigma_{i}(t)(\sigma_{i}^{*} - \sigma_{i}(t))
	\end{equation*}
	
	Where $\sigma_{i}$ and $\sigma_{i}^{*}$ are the $i$th singular values of $W$ and $W^{*}$, respectively, corresponding to the same singular vectors.
\end{theorem*}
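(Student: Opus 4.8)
The plan is to reduce the quadratic network to a symmetric depth-$2$ matrix factorization problem over Gaussian inputs, and then reuse the commuting-matrices argument behind Theorem~\ref{thm:matrix_sensing}. First I would put the variance loss into closed form. Writing $M = W^{T}W$ and $M^{*} = W^{*T}W^{*}$ (both automatically PSD), the residual is $y(x) - f_{W}(x) = \langle M^{*}-M,\, xx^{T}\rangle = x^{T}(M^{*}-M)x$, and for standard Gaussian $x$ and any symmetric $A$ the Isserlis moment identities give $\mathbb{E}_{x}[x^{T}Ax] = \Tr(A)$ and $\mathbb{E}_{x}[(x^{T}Ax)^{2}] = (\Tr A)^{2} + 2\|A\|_{F}^{2}$, hence $\Var_{x}[x^{T}Ax] = 2\|A\|_{F}^{2}$. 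So \eqref{def:variance_loss} collapses to $\ell_{var}(W) = \tfrac{1}{16}\Var_{x}[x^{T}(M^{*}-M)x] = \tfrac{1}{8}\|M-M^{*}\|_{F}^{2}$; the constant $\tfrac1{16}$ is chosen precisely so that the variance subtraction cancels the $(\Tr(M-M^{*}))^{2}$ term that the plain squared loss would carry and that would couple all the eigenvalues together. Up to the constant, this is a depth-$2$ matrix-sensing objective whose ``model matrix'' is the symmetric product $W^{T}W$ and whose target is $W^{*T}W^{*}$.

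Next I would take gradient flow. From $\ell_{var} = \tfrac18\|W^{T}W-M^{*}\|_{F}^{2}$ one computes $\dot W = -\nabla_{W}\ell_{var} = \tfrac12 W(M^{*}-W^{T}W)$, and therefore the induced dynamics of $M = W^{T}W$ are $\dot M = \dot W^{T}W + W^{T}\dot W = \tfrac12\big((M^{*}-M)M + M(M^{*}-M)\big) = \tfrac12(M^{*}M + MM^{*}) - M^{2}$.

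Then, following the proof of Theorem~\ref{thm:matrix_sensing}, I would show that $M(t)$ and $M^{*}$ commute for all $t$. Writing $[A,B] = AB-BA$, at initialization $[M(0),M^{*}] = [\sigma_{0}I,M^{*}] = 0$, and $\tfrac{d}{dt}[M,M^{*}] = [\dot M,M^{*}]$ is a homogeneous linear function of $[M,M^{*}]$, since every term factors through it via $[M^{*}M,M^{*}] = M^{*}[M,M^{*}]$, $[MM^{*},M^{*}] = [M,M^{*}]M^{*}$ and $[M^{2},M^{*}] = M[M,M^{*}] + [M,M^{*}]M$; uniqueness of ODE solutions then forces $[M,M^{*}]\equiv 0$. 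Fixing a time-independent orthonormal eigenbasis $\{v_{i}\}$ of $M^{*}$, the matrix $M(t)$ stays diagonal in it, so the $\dot M$ equation decouples coordinatewise into $\dot\sigma_{i} = \sigma_{i}^{*}\sigma_{i} - \sigma_{i}^{2} = \sigma_{i}(\sigma_{i}^{*}-\sigma_{i})$, where $\sigma_{i},\sigma_{i}^{*}$ are the eigenvalues of $W^{T}W$ and $W^{*T}W^{*}$ along $v_{i}$ (equivalently, the singular values of the induced quadratic model and of the labeling network, attached to the same singular vector) — exactly the claimed equation.

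The main obstacle will be this diagonalization step: because the flow starts at the completely degenerate point $M(0)=\sigma_{0}I$, I must argue carefully that a \emph{single, time-independent} eigenbasis works throughout, i.e.\ that as the degeneracy breaks the eigenvectors of $M(t)$ align with those of $M^{*}$ rather than drifting. The homogeneous-linear-ODE argument for $[M,M^{*}]$ is what makes this go through, but it has to be phrased precisely, and when $M^{*}$ itself has repeated eigenvalues one restricts the $\dot M$ equation to each $M^{*}$-eigenspace (where $M$ remains a scalar multiple of the identity, since it starts as $\sigma_{0}I$ there). The only other point requiring attention is the second-moment computation, which genuinely relies on the Gaussian-input assumption included in the appendix version of the statement.
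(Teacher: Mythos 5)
Your proposal is correct and follows essentially the same route as the paper's proof: reduce the variance loss over Gaussian inputs to $\tfrac{1}{8}\|W^{T}W - W_{*}^{T}W_{*}\|_{F}^{2}$, derive $\dot W = \tfrac12 W(W_{*}^{T}W_{*}-W^{T}W)$, obtain the symmetrized flow for $W^{T}W$, and decouple it in the eigenbasis of $W_{*}^{T}W_{*}$ using the isotropic initialization $W_{0}^{T}W_{0}=\sigma_{0}I$. Your commutator-ODE argument for why a single time-independent eigenbasis persists is a slightly more rigorous phrasing of the step the paper handles by noting that the off-diagonal entries remain static under the diagonalized dynamics, but the substance is identical.
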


\begin{proof}
	
	Our proof will follow similar lines as the analysis of the deep matrix sensing model. Taking the expectation of the variance loss over Gaussian inputs for our model gives us:
	
	\begin{align*}
	\ell_{var}(W) = & \frac{1}{16} \mathbb{E}_{x}[(\langle W_{*}^{T}W_{*} - W^{T}W, xx^{T} \rangle)^{2}] - \frac{1}{16} \mathbb{E}_{x}[\langle W_{*}^{T}W_{*} - W^{T}W, xx^{T} \rangle]^{2} \\
	= & \frac{1}{8} || W_{*}^{T}W_{*} - W^{T}W ||_{F}^{2} + \frac{1}{16}Tr(W_{*}^{T}W_{*} - W^{T}W)^{2} - \frac{1}{16}Tr(W_{*}^{T}W_{*} - W^{T}W)^{2} \\
	= & \frac{1}{8} || W_{*}^{T}W_{*} - W^{T}W ||_{F}^{2}
	\end{align*}
	
	Following the gradient flow dynamics over $W$ leads to the following differential equation:
	
	$$
	\dot{W} = \frac{1}{2} W(W_{*}^{T}W_{*} - W^{T}W)
	$$
	
	We can now calculate the gradient flow dynamics of $W^{T}W$ using the chain rule:
	
	\begin{equation} \label{eq:WTW_dynamics}
	\dot{W^{T}W} = W^{T}\dot{W} + \dot{W}^{T}W = \frac{1}{2} \Big( W^{T}W(W_{*}^{T}W_{*} - W^{T}W) + (W_{*}^{T}W_{*} - W^{T}W)W^{T}W \Big)
	\end{equation} 
	
	Now, under our initialization $W_{0}^{T}W_{0} = \sigma_{0}I$, we get that $W^{T}W$ and $W_{*}^{T}W_{*}$ are simultaneously diagonalizable at initialization by some matrix $U$, such that the following is true for diagonal $D$ and $D^{*}$:
	
	$$
	D(0) = U^{T}W(0)^{T}W(0)U
	$$  
	
	$$
	D^{*} = U^{T}W_{*}^{T}W_{*}U
	$$  
	
	Multiplying equation \eqref{eq:WTW_dynamics} by $U$ and $U^{T}$ gives us the following dynamics for the singular values of $W^{T}W$:
	
	\begin{equation} \label{eq:D_dynamics}
	\dot{D} = \frac{1}{2} \Big( D(D^{*} - D) + (D^{*} - D)D \Big) = D(D^{*}-D)
	\end{equation}
	
	These matrices are diagonal at initialization, and remain diagonal throughout the dynamics (the off-diagonal elements are static according to these equations). We may now look at the dynamics of a single diagonal element, noticing it is equivalent to the depth-2 toy model:
	
	$$
	\dot{\sigma}_{i} = \sigma_{i}(\sigma_{i}^{*}-\sigma_{i})
	$$
	
\end{proof}

\subsection{Discussion of the Variance Loss}

It may seem that the variance loss is an unnatural loss function to analyze, since it isn't used in practice. While this is true, we will show how the dynamics of this loss function are an approximation of the square loss dynamics.

We begin by describing the dynamics of both losses, showing how incremental learning can't take place for quadratic networks as defined over the squared loss. Then, we show how adding a global bias to the quadratic network leads to similar dynamics for small initialization scales.

\subsubsection{Dynamical Derivation}

in the previous section, we derive the differential equations for the singular values of $W^{T}W$ under the variance loss:

\begin{equation}
\dot{\sigma}_{i} = \sigma_{i}(\sigma_{i}^{*}-\sigma_{i})
\end{equation}

We will now derive similar equations for the squared loss. The scaled squared loss in expectation over the Gaussian inputs is:

\begin{align*}
\ell(W) = & \frac{1}{16} \mathbb{E}_{x}[(\langle W_{*}^{T}W_{*} - W^{T}W, xx^{T} \rangle)^{2}] \\
= & \frac{1}{8} || W_{*}^{T}W_{*} - W^{T}W ||_{F}^{2} + \frac{1}{16}Tr(W_{*}^{T}W_{*} - W^{T}W)^{2}
\end{align*}

Defining $\Delta=W_{*}^{T}W_{*} - W^{T}W$ for brevity, the differential equations for $W$ become:

$$
\dot{W} = \frac{1}{2}W\Delta + \frac{1}{4}Tr(\Delta)W
$$

Calculating the dynamics of $W^{T}W$ after noting that it is simultaneously diagonalizable with $W_{*}^{T}W_{*}$ (as in the derivation for the variance loss) leads to the following differential equations for the singular values of $W^{T}W$:

\begin{equation}
\dot{\sigma}_{i} = \sigma_{i}\Big(\sigma_{i}^{*}-\sigma_{i} +\frac{1}{2} \sum_{j}(\sigma_{j}^{*}-\sigma_{j}) \Big)
\end{equation}

We see that the equations are now coupled and so we cannot solve them analytically. Another issue is that for our initialization, all singular values have very similar dynamics at initialization due to the coupling. For example, values corresponding to small optimal singular values grow much faster than in the variance loss dynamics, due to the effect the large optimal singular values have on them.

We see from these equations that we shouldn't expect quadratic networks optimized over the squared loss to exhibit incremental learning behavior. We next show how adding a global bias to our model can help.

\subsubsection{Variance Loss Approximates the Squared Loss}

To see how the variance loss can have dynamics resembling those of the squared loss, we will add a global (trainable) bias to our model. This means our model is now parameterized by $W\in \mathbb{R}^{d \times d}$ and a scalar $b\in\mathbb{R}$:

$$
f_{W,b}(x) = \langle W^{T}W, xx^{T} \rangle + b
$$

We may now analyze gradient flow of the squared loss. Following the same methods as before, this leads us to the following differential equations:

\begin{align}
\dot{\sigma}_{i} =& \sigma_{i}\Big(\sigma_{i}^{*}-\sigma_{i} +\frac{1}{2} \sum_{j}(\sigma_{j}^{*}-\sigma_{j}) + \frac{1}{2}(b^{*} - b) \Big) \\
\dot{b} =& \sum_{j}(\sigma_{j}^{*}-\sigma_{j}) + b^{*} - b
\end{align}

Notice how, if $b$ is at it's optimum at a given time ($b=\sum_{j}(\sigma_{j}^{*}-\sigma_{j}) + b^{*}$), the dynamics of $\sigma_{i}$ align with those of the variance loss. Alternatively, when $b=b^{*}$, we recover the dynamics of the squared loss without the global bias term.

To convince ourselves that the dynamics of this model resemble those of the variance loss, we would need to explain why the global bias is at it's optimum ``most of the time'', such that the singular values don't change much during the times when it is not at it's optimum.

Observing the differential equations for $b$ and for $\sigma_{i}$, we see they are similar (if we ignore the $\sigma_{i}^{*}-\sigma_{i}$ value which doesn't change the order of magnitude of the entire expression when there haven't been many learning phases yet). The only difference being a multiplication by $\sigma_{i}$. This means that we may informally write:

$$
\frac{\dot{\sigma}_{i}(t)}{\dot{b}(t)} \approx \sigma_{i}(t)
$$

Since at initialization and until the learning phase of $\sigma_{i}$ takes place, we have $\sigma_{i}(t) \ll 1$, we see that the global bias optimizes much faster than the singular values for which the learning phase hasn't begun yet. This means these singular values will remain small during the times in which the bias isn't optimal, and so incremental learning can still take place (assuming a small enough initialization).

Under these considerations, we say that the dynamics of the squared loss for a quadratic network with an added global bias resemble the idealized dynamics of the variance loss for a depth-2 linear model which we analyze formally in the paper. In figure \ref{fig:quadratic} we experimentally show how adding a bias to a quadratic network does lead to incremental learning similar to the depth-2 toy model.

\begin{figure}
	\begin{center}
		\includegraphics[width=12cm]{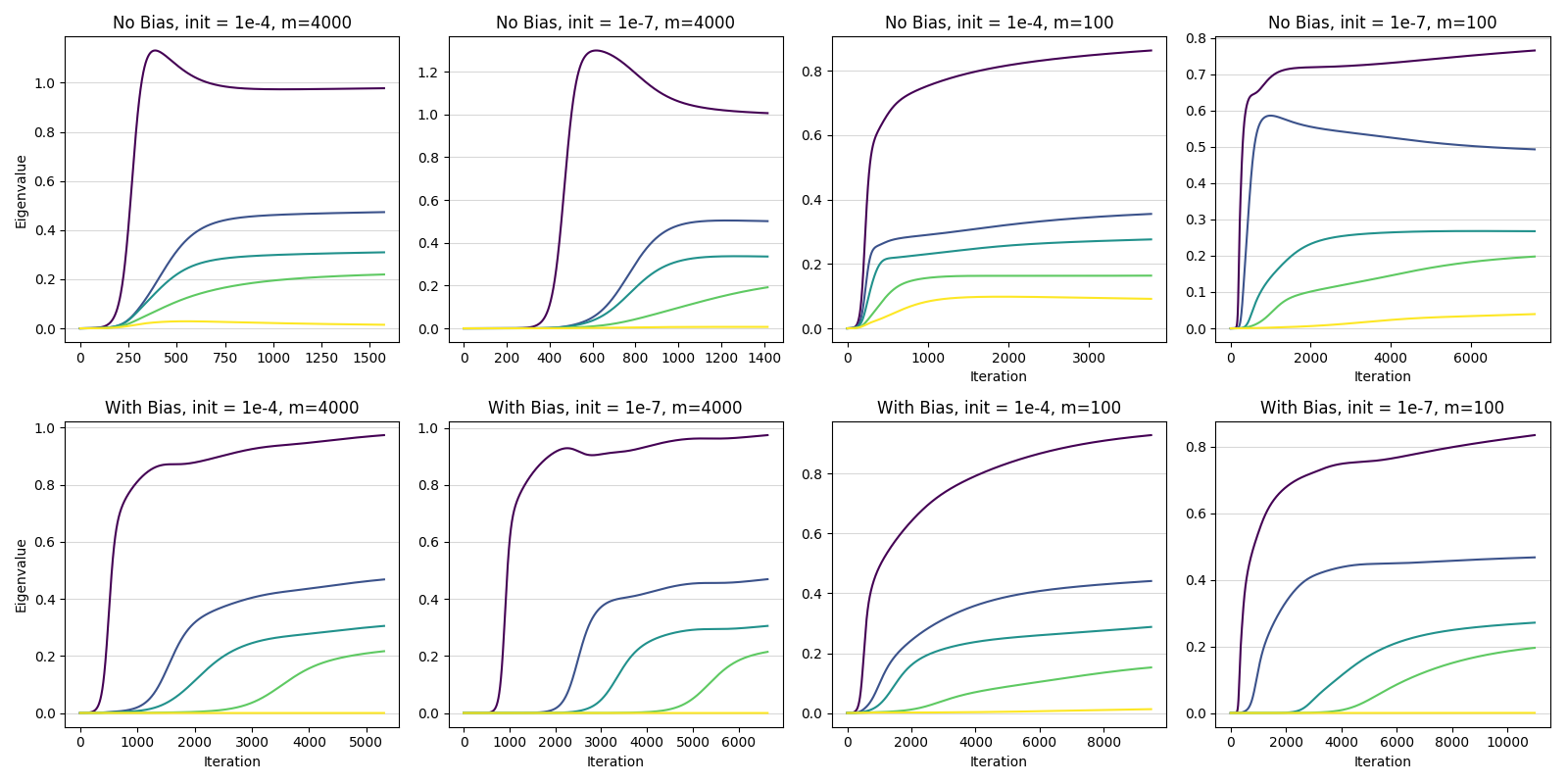}
	\end{center}
	\caption{
		Quadratic model's evolution of top-$5$ singular values for a rank-4 labeling function. The rows correspond to whether or not a global bias is introduced to the model. The first two columns are for a large dataset (one optimal solution) and the last two columns are for a small dataset (over-determined problem).
		When a bias is introduced, it is initialized to it's optimal value at initialization.
		Note how without the bias, the singular values are learned together and there is over-shooting of the optimal singular value caused by the coupling of the dynamics of the singular values. For the small datasets, we see that the model with no bias reaches a solution with a larger rank. 
		Once a global bias is introduced, the dynamics become more incremental as in the analysis of the variance loss. Note that in this case the solution obtained for the small dataset is the optimal low-rank solution.
	}
	\label{fig:quadratic}
\end{figure}

\section{Incremental Learning in Classification} \label{app:convolutional_model}

\subsection{Diagonal Networks}

In section \ref{sec:classification} we viewed our toy model as a special case of the deep diagonal networks described in \cite{gunasekar2018implicit}, expected to be biased towards sparse solutions. Figure \ref{fig:classification} shows the dynamics of the largest values of $\sigma$ for different depths of the model. We see that the same type of incremental learning we saw in earlier models exists here as well - the features are learned one by one in deeper models, resulting in a sparse solution. The leftmost panel shows how the initialization scale plays a role here as well, with the solution being more sparse when the initialization is small. We should note that these results do not defy the results of \cite{gunasekar2018implicit} (from which we would expect the initialization not to matter), since their results deal with the solution at $t \rightarrow \infty$.

\begin{figure}
	\begin{center}
		\includegraphics[width=1\linewidth]{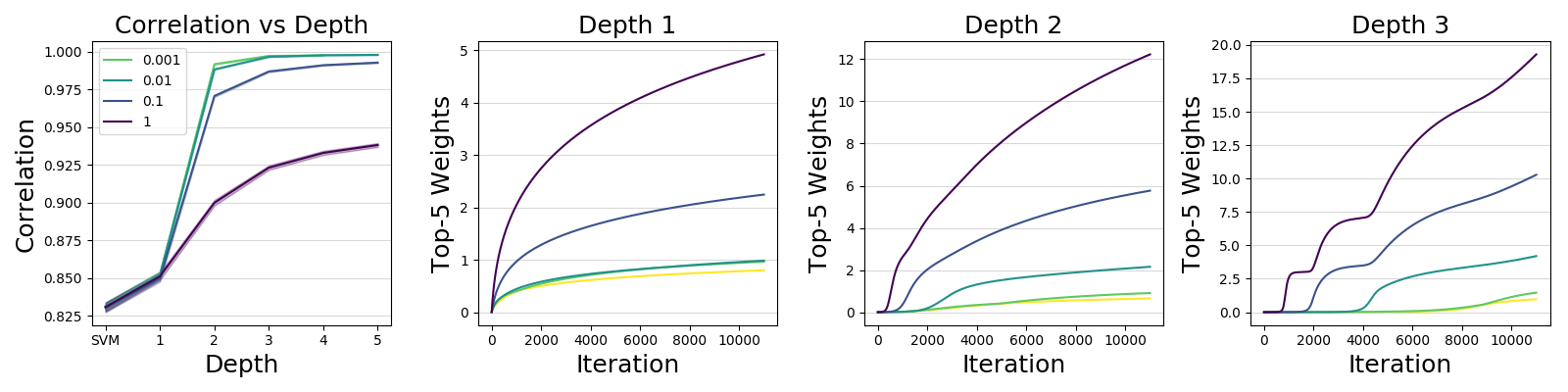}
	\end{center}
	\caption{
		Incremental learning in binary classification. A model as in section \ref{sec:classification} is trained over $200$ i.i.d. random Gaussian examples, where $d=100$. The data is labeled by a weight vector with $4$ nonzero values, making the problem realizable with a sparse solution while the max-margin solution isn't sparse.
		The left panel describes the obtained solution's correlation with the sparse labeling vector for different depths and initializations. The results are averaged over 100 experiments, with shaded regions denoting empirical standard deviations. We see that depth-1 models reach results similar to the max-margin SVM solution as predicted by \citet{gunasekar2018implicit}, while deeper models are highly correlated with the sparse solution, with this correlation increasing when the initialization scale is small.
		The other panels show the evolution of the absolute values of the top-$5$ weights of $\sigma$ for the smallest initialization scale. Note that as we increase the depth, incremental learning is clearly presented.
	}
	\label{fig:classification}
\end{figure}

\subsection{Convolutional Networks - Preliminaries}

The linear circular-convolutional network of \citet{gunasekar2018implicit} deals with one-dimensional convolutions with the same number of outputs as inputs, such that the mapping from one hidden layer to the next is parameterized by $w_{n}$ and defined to be:

\begin{equation}
h_{n}[i] = \sum_{k=0}^{d-1}w_{n}[k]h_{n-1}[(i+k) \mod d] = (h_{n-1} \star w_{n})[i]
\end{equation}

The final layer is a fully connected layer parameterized by $w_{N} \in \mathbb{R}^{d}$, such that the final model can be written in the following way:

\begin{equation} \label{eq:conv_model}
f_{\sigma}(x) = ((((x \star w_{1}) \star w_{2}) \cdots) \star w_{N-1})^{T}w_{N}
\end{equation}

Lemma 3 from \citet{gunasekar2018implicit} shows how we can relate the Fourier coefficients of the weight vectors to the Fourier coefficients of the linear model induced by the model:

\begin{lemma*}
	For the circular-convolutional model as in \eqref{eq:conv_model}:
	
	$$
	\hat{\sigma} = diag(\hat{w}_{1}) \cdots diag(\hat{w}_{N-1}) \hat{w}_{N},
	$$
	
	where for $n=1,2,...,N, \hat{w}_{n}\in \mathbb{C}^{d}$ are the Fourier coefficients of the parameters $w_{n} \in \mathbb{R}^{d}$.
\end{lemma*}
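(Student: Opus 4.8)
The plan is to carry out the entire computation in the Fourier domain, where each circular cross-correlation becomes a diagonal (pointwise) operation, so that the $N$-fold composition collapses to a single pointwise product. First I would observe that $f_\sigma$ is linear in $x$: each hidden layer $h_n$ depends linearly on $h_{n-1}$, hence on $x$, and the final contraction with $w_N$ is linear, so there is a unique $\sigma\in\mathbb{R}^d$ with $f_\sigma(x)=\langle\sigma,x\rangle$, and it suffices to identify its discrete Fourier transform $\hat\sigma$. I would fix the unnormalized DFT convention $\hat a[\ell]=\sum_{m=0}^{d-1}a[m]e^{-2\pi \mathrm{i} m\ell/d}$ and recall two standard facts: (i) the circular cross-correlation theorem, which for the operation defined in the paper gives $\widehat{h_{n-1}\star w_n}[\ell]=\widehat{h_{n-1}}[\ell]\,\overline{\widehat{w_n}[\ell]}$ (the conjugate appears because this is correlation, not convolution, and $w_n$ is real); and (ii) Parseval's identity, $\langle u,v\rangle=\tfrac1d\sum_\ell\hat u[\ell]\,\overline{\hat v[\ell]}$ for real vectors $u,v$.

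The core of the argument is then a one-line induction. Starting from $h_0=x$ and applying (i) repeatedly yields $\widehat{h_{N-1}}[\ell]=\hat x[\ell]\prod_{n=1}^{N-1}\overline{\widehat{w_n}[\ell]}$. Applying Parseval (ii) to the final fully connected layer, $f_\sigma(x)=\langle w_N,h_{N-1}\rangle=\tfrac1d\sum_\ell\widehat{w_N}[\ell]\,\overline{\widehat{h_{N-1}}[\ell]}$, and substituting — the outer conjugate flips every $\overline{\widehat{w_n}}$ back to $\widehat{w_n}$ and turns $\hat x[\ell]$ into $\overline{\hat x[\ell]}$ — gives
\[
f_\sigma(x)=\frac1d\sum_{\ell=0}^{d-1}\Big(\prod_{n=1}^{N}\widehat{w_n}[\ell]\Big)\,\overline{\hat x[\ell]}.
\]
Comparing this with $\langle\sigma,x\rangle=\tfrac1d\sum_\ell\hat\sigma[\ell]\,\overline{\hat x[\ell]}$, an identity that must hold for every real $x$, I would conclude $\hat\sigma[\ell]=\prod_{n=1}^N\widehat{w_n}[\ell]$ for all $\ell$, which is exactly $\hat\sigma=\mathrm{diag}(\widehat{w_1})\cdots\mathrm{diag}(\widehat{w_{N-1}})\,\widehat{w_N}$.

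I do not expect a genuine obstacle; the statement is essentially the convolution theorem together with Parseval. The one place that needs care is the bookkeeping of DFT conventions: tracking precisely where the conjugates and the $1/d$ normalization land, and checking that the conjugates produced by each cross-correlation are exactly cancelled by the single conjugate coming from Parseval on the final inner-product layer (as in the display above). The other small point to make rigorous is the final "match the coefficients of $\hat x$" step: since $\sigma$ is real its transform $\hat\sigma$ is conjugate-symmetric, as is $\prod_n\widehat{w_n}$, and as $x$ ranges over $\mathbb{R}^d$ the vectors $\hat x$ range over exactly the conjugate-symmetric subspace on which the pairing $c\mapsto\sum_\ell c[\ell]\overline{\hat x[\ell]}$ is nondegenerate, so equality of the two expressions for all $x$ forces equality of the two conjugate-symmetric vectors. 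If one instead adopts the opposite index sign in the correlation, or the normalized DFT, the identical computation goes through with the constants adjusted accordingly.
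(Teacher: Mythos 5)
Your derivation is correct: the cross-correlation theorem with the conjugate (forced by the $+k$ index in the paper's definition of $\star$ and the realness of $w_n$), Parseval on the final fully connected layer, and the matching of conjugate-symmetric Fourier vectors all check out, and the conjugates cancel exactly as you claim to give $\hat\sigma[\ell]=\prod_{n=1}^N\widehat{w_n}[\ell]$. Note that this paper does not actually prove the lemma --- it is quoted verbatim as Lemma~3 of \citet{gunasekar2018implicit} --- so there is no in-paper proof to compare against; your argument is the standard DFT-diagonalization proof given in that reference, and it is complete, including the nondegeneracy point needed to pass from ``equal for all real $x$'' to equality of the two conjugate-symmetric coefficient vectors.
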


This lemma connects the convolutional network to the diagonal network, and thus we should expect to see the same incremental learning of the values of the diagonal network exhibited by the Fourier coefficients of the convolutional network.

\subsection{Convolutional Networks - Empirical Examination}

In figure \ref{fig:conv} we see the same plots as in figure \ref{fig:classification} but for the Fourier coefficients of the convolutional model. We see that even when the model is far from the toy parameterization (there is no weight sharing and the initialization is with random Gaussian weights), incremental learning is still clearly seen in the dynamics of the model. We see how the inherent reason for the sparsity towards sparse solution found in \citet{gunasekar2018implicit} is the result of the dynamics of the model - small amplitudes are attenuated while large ones are amplified.

\begin{figure}
	\begin{center}
		\includegraphics[width=1\linewidth]{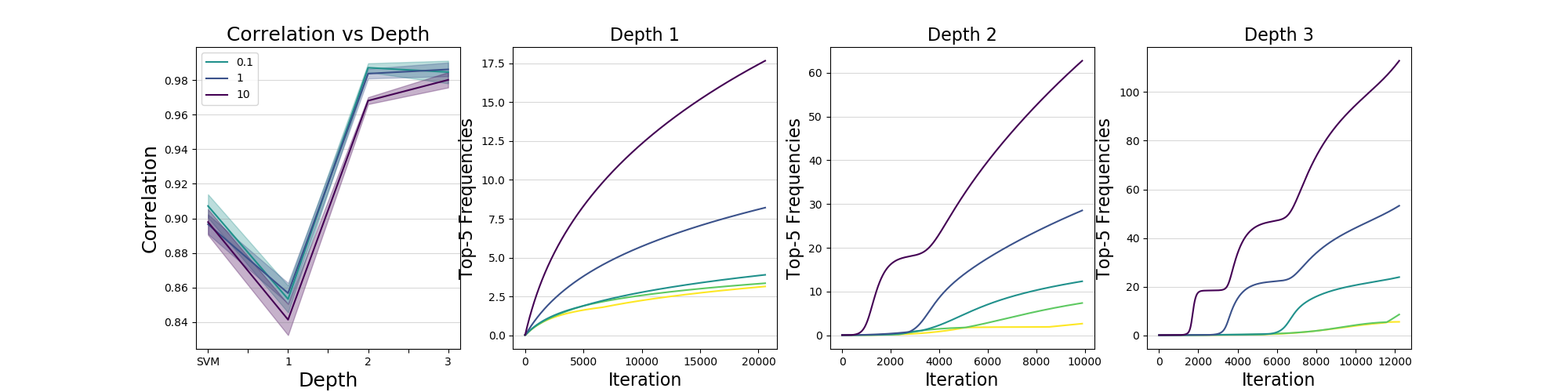}
	\end{center}
	\caption{
		Incremental learning in convolutional networks. A model as in appendix \ref{app:convolutional_model} is trained over $200$ i.i.d. random Gaussian examples, where $d=100$. The weights are initialized randomly and the data is labeled by a weight vector with $4$ nonzero frequencies, making the problem realizable with a sparse solution in the frequency domain.
		The left panel describes the obtained solution's correlation in the frequency domain with the sparse labeling vector for different depths and initializations. The results are averaged over 9 experiments, with shaded regions denoting empirical standard deviations. We see that depth-1 models reach results similar to the max-margin SVM solution, while deeper models are highly correlated with the optimal sparse solution.
		The other panels show the evolution of the amplitudes of the top-$5$ frequencies of $\sigma$ for the smallest initialization scale. Note that as we increase the depth, incremental learning is clearly presented.
	}
	\label{fig:conv}
\end{figure}

\end{document}